\setlist[itemize]{leftmargin=*}
\setlist[enumerate]{leftmargin=*}
\let\cite\citep
\newcommand{\ours}{\textbf{\textsc{sqaler}}} 
\newcommand{\kg}{\mathcal{G}}
\newcommand{\nodes}{\mathcal{V}}
\newcommand{\relations}{\mathcal{R}}
\newcommand{\edges}{\mathcal{E}}
\newcommand{\query}{Q}
\newcommand{\qlen}{|\query|}
\newcommand{\lquery}{\mathsf{Q}}
\newcommand{\llit}{\mathsf{e}}
\newcommand{\lconj}{\mathsf{C}}
\newcommand{\lvar}{\mathsf{V}}
\newcommand{\lrel}{r}
\newcommand{\ltarget}{\lvar_{?}}
\newcommand{\beams}{\mathcal{B}}
\newcommand{\reach}{\textit{reach}_{\kg}}
\newcommand{\answers}{\mathcal{A}_{\query}}
\newcommand{\seed}{\mathcal{V}_{\query}}
\newcommand{\coal}[1]{\tilde{#1}}
\newcommand{\coalkg}{\coal{\kg}_{\query}}
\newcommand{\coalnodes}{\coal{\nodes}_{\query}}
\newcommand{\coalrel}{\coal{\relations}_{\query}}
\newcommand{\coaledges}{\coal{\edges}_{\query}}
\newcommand{\candidates}{\coal{\mathcal{A}}_{\query}}
\newcommand{\relseq}{R}
\newcommand{\relmodel}{\phi}
\newcommand{\relweights}{\relmodel}
\newcommand{\edgemodel}{\psi}
\newcommand{\subkg}{\kg(\candidates)}
\newcommand{\subnodes}{\nodes(\candidates)}
\newcommand{\maxdegout}{d_{\textit{max}}^{+}}
\newcommand{\tmax}{\tau_{\textit{max}}}
\newcommand{\dechidd}{\mathbf{X}^l_{t}}
\newcommand{\dechidditem}{\mathbf{x}^l_{t}}
\newcommand{\decself}{\mathbf{\bar{x}}^l_{t}}
\newcommand{\decenc}{\bar{\mathbf{x}}^{\query,l}_{t}}
\newcommand{\decgraph}{\bar{\mathbf{x}}^{\relseq,l}_{t}}
\newtheorem{proposition}{Proposition}
\newtheorem{lemma}{Lemma}
\begin{document}

\title{SQALER: Scaling Question Answering by Decoupling Multi-Hop and Logical Reasoning}


\author{Mattia Atzeni \\
	IBM Research, EPFL \\
	Switzerland \\
	\texttt{atz@zurich.ibm.com} \\
	\And
	Jasmina Bogojeska \\
	IBM Research \\
	Switzerland \\
	\texttt{jbo@zurich.ibm.com} \\
	\And
	Andreas Loukas \\
	EPFL \\
	Switzerland \\
	\texttt{andreas.loukas@epfl.ch} \\
}



\makeatletter
\let\latexparagraph\paragraph
\RenewDocumentCommand{\paragraph}{som}{%
  \IfBooleanTF{#1} 
    {\latexparagraph*{#3}}
    {\IfNoValueTF{#2}
       {\latexparagraph{\maybe@addperiod{#3}}}
       {\latexparagraph[#2]{\maybe@addperiod{#3}}}%
  }%
}
\newcommand{\maybe@addperiod}[1]{%
  #1\@addpunct{.}%
}
\makeatother


\maketitle

\begin{abstract}
State-of-the-art approaches to reasoning and question answering over knowledge graphs (KGs) usually scale with the number of edges and can only be applied effectively on small instance-dependent subgraphs. In this paper, we address this issue by showing that multi-hop and more complex logical reasoning can be accomplished separately without losing expressive power. Motivated by this insight, we propose an approach to multi-hop reasoning that scales linearly with the number of relation types in the graph, which is usually significantly smaller than the number of edges or nodes. This produces a set of candidate solutions that can be provably refined to recover the solution to the original problem. Our experiments on knowledge-based question answering show that our approach solves the multi-hop MetaQA dataset, achieves a new state-of-the-art on the more challenging WebQuestionsSP, is orders of magnitude more scalable than competitive approaches, and can achieve compositional generalization out of the training distribution.
\end{abstract}

\section{Introduction}
Reasoning, namely the ability to infer conclusions and draw predictions based on existing knowledge, is a hallmark of human intelligence.
Infusing the same ability into machine learning models has been a major challenge \citep{Marcus2020,LakeBaroni2018} and has historically
required complex systems made of several hand-crafted or learned components \citep{Yao2014,Ferrucci2010}.
Recently, the paradigm has shifted to deep learning approaches \cite{Sun2020,Sun2019}, where neural networks are used to reason over structured knowledge or a text corpus.
In this work, we assume that the source of knowledge is a structured knowledge graph (KG) and we tackle the problem of \textit{knowledge-based question answering} (KBQA), namely finding answers to natural language queries involving multi-hop and logical reasoning over the KG.

Answering queries over a knowledge graph involves many challenges, among which scalability is a major issue. Real-world KGs often contain millions of nodes and even a 2-hop neighborhood of the entities mentioned in the query may comprise tens of thousands of nodes.
Many state-of-the-art approaches \cite{Sun2018,Sun2019,Saxena2020} address the challenge of scalability by building small query-dependent subgraphs. 
To this end, they usually use simple heuristics \citep{Sun2018} or, in some cases, iterative procedures based on learned classifiers \citep{Sun2019}.
This preprocessing step is usually needed because each forward pass in end-to-end neural networks for KBQA scales at least linearly with the number of edges in the  subgraph.
Training neural networks involves repeated evaluation, which renders even a linear complexity impractical for graphs of more than a few tens thousands of nodes.  

In order to address this issue, we introduce a novel approach called \ours{} (Scaling Question Answering by Leveraging Edge Relations). 
%
The method first learns a model that generates a set of candidate answers (entities in the KG) by \textit{multi-hop reasoning}: 
the candidate solutions are obtained by starting from the set of entities mentioned in the question and seeking those that provide an answer by chained relational following operations.
We  refer to this module as the \textit{relation-level} model.
%
We show that this multi-hop reasoning step can be done efficiently and provably generates a set of candidates including all the actual answers to the original question. \ours{} then uses a second-stage \textit{edge-level} model that recovers the real answers by performing logical reasoning on a subgraph in the vicinity of the candidate solutions. 
A visual summary of our approach is depicted in Figure \ref{fig:overview}.

The main contributions and takeaway messages of this work are the following:
\begin{enumerate}
\item KBQA can be addressed by first performing multi-hop reasoning on the KG and then refining the result with more sophisticated logical reasoning without losing expressive power (we will elaborate this claim in more details in Section \ref{sec:analysis}).
\item Multi-hop reasoning can be accomplished efficiently with a method that scales linearly with the number of relation types in the KG, which are usually significantly fewer than the number of facts or entities.
\end{enumerate}


In the remainder of the paper, we first provide an extensive overview of our approach and a theoretical analysis of the expressive power and the computational complexity of \ours{}. Our experimental results show that \ours{} achieves better reasoning performance than state-of-the-art approaches, generalizes compositionally out of the training distribution, and scales to the size of real-world knowledge graphs with millions of entities.

\section{Scaling KBQA with relation and edge-level reasoning}
\label{sec:approach}

This section provides a detailed description of our approach.
We start by defining the problem formally and giving an intuitive overview of \ours{}.
Then, we discuss the approach in more details and we analyze its computational complexity and expressive power

\paragraph{Problem statement}
We denote a knowledge graph as $\kg = (\nodes, \relations, \edges)$, where $v \in \nodes$ represents an entity or node in $\kg$, $r \in \relations$ is a relation type, and we write $v \xrightarrow{r} v'$ to denote an edge in $\edges$ labeled with relation type $r \in \relations$ between two entities $v, v' \in \nodes$.
We extend the same notation to sets of nodes by writing $\nodes_i \xrightarrow{r} \nodes_j$ if $\nodes_j = \{v_j \in \nodes \mid v_i \xrightarrow{r} v_j, v_i \in \nodes_i\}$.
Given a knowledge graph $\kg = (\nodes, \relations, \edges)$  and a natural language question $\query$,
expressed as a sequence of tokens $\query = (q_1, q_2, \dots, q_{\qlen})$,
in \textit{knowledge-based question answering} the objective is to identify a set of nodes $\answers \subseteq \nodes$ representing the correct answers to $\query$.
Following previous work \cite{Sun2018,Sun2019,Sun2020}, we assume that the set of entities mentioned in the question $\seed \subseteq \nodes$ is given. These nodes are also called the \emph{anchor nodes} of the question and in practice are commonly obtained using an entity-linking module.

\paragraph{Overview}
KBQA can be cast as an entity seeking problem on $\kg$ by translating $Q$ into a set of nodes $\seed \subseteq \nodes$ (the starting points of the search) and seeking for nodes that provide an answer~\cite{Sun2018,Sun2019,Sun2020}.
Attempting to find $\answers$ directly on $\kg$ is prohibitive in practice, as even the most efficient graph-based neural networks generally scale at least linearly with the number of edges. 
Our approach mitigates this issue by breaking the problem in two subproblems.

(a) We first utilize a \textit{relation-level} model $\relmodel$ to obtain a set of \textit{candidate answers} $\candidates$, such that $\answers \subseteq \candidates$. 
We refer to $\relmodel$ as ``\textit{relation-level}'' because, as we will see, it operates on the \textit{coalesced graph}, a simplified representation of $\kg$, where edges of the same relation type are coalesced. The coalesced graph is constructed before training and incurs a one-time linear cost. By exploiting it during training, the relation-level model scales with the number of (distinct) relation types in the KG, which are usually significantly fewer than the number of edges or entities.


(b) The candidate answers are then refined using an \textit{edge-level} model $\psi$ applied on a subgraph
%
$\subkg$ of the original knowledge graph in the vicinity of $\candidates$.
We should note that the refining step is not always necessary. Indeed, we found that a relation-level model is sufficient to perfectly solve tasks like multi-hop question answering \cite{Zhang2018}. 
Figure \ref{fig:overview} shows an overview of our approach.

\subsection{Relational coalescing for efficient knowledge seeking}
\label{sec:seeking}

Our approach relies on a relation-level model $\phi$ that operates as a knowledge seeker in $\kg$.
The model identifies a node $v$ as a candidate $v \in \candidates$ based on the sequence of relations that connect it with $\seed$.
This can be achieved by using a neural network $\relmodel$ to predict how likely it is that the correct answer is reached from $\seed$ by following a sequence of relations $R$.
%
%

%

\begin{figure}[t!]
\centering
\includegraphics[width=0.9\linewidth]{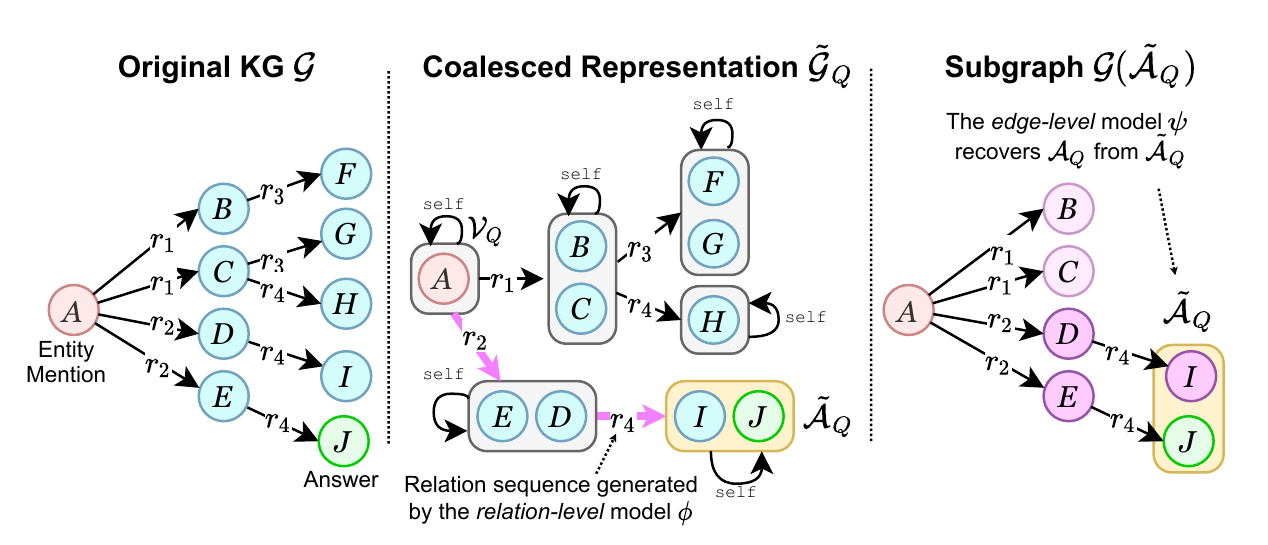}
\caption{Overview of our approach. A \textit{relation-level} model operates on a coalesced representation of the original KG to generate a set of candidate answers $\candidates$. This approximate solution is then refined by an \textit{edge-level} model applied on a subgraph of the original KG.}
\label{fig:overview}
\end{figure}

%
%

\paragraph{Reachability}
To define how our method works, it will help to formalize the concept of reachability.
Let $\relseq = (r_1, \dots, r_{|\relseq|})$ be a sequence of relations. We say that ``\textit{$v$ is $R$-reachable from $\seed$}'' if there exists a path $P = (v_1, \ldots, v_{|R|}, v)$ in $\kg$ such that: %
$$
v_1 \in \seed \quad  \text{and} \quad  v_{i} \xrightarrow{r_i} v_{i+1} \quad  \text{for every} \quad i = 1, \ldots, |R|.
$$
%
%
That is, we can reach $v$ by starting from a node in $\seed$ and following a sequence of edges with relation types $R$. 
We also denote by $\reach(\seed, \relseq)$ the set of nodes that are $\relseq$-reachable from $\seed$:
$$
    \reach(\seed, R) = \{v \in \nodes \mid v \ \text{is} \ R\text{-reachable from } \ \seed \}.
$$

\paragraph{Relational coalescing} 
Given a knowledge graph $\kg$, a question $\query$, and a set of entity mentions $\seed$, we consider a representation of the graph $\coalkg = (\coalnodes, \coalrel, \coaledges)$, which allows us to efficiently compute sets of nodes that are reachable from $\seed$. 
We refer to this representation as the question-dependent coalesced KG, because edges with the same relation type are coalesced, as shown in Figure \ref{fig:overview}. 
The nodes of $\coalkg$ are sets of nodes of $\kg$ that are reachable from $\seed$ by following any possible sequence of relations originating from $\seed$. The graph $\coalkg$ has an edge $\nodes_i \xrightarrow{r} \nodes_j$ if $\nodes_j$ is the set of nodes that are reachable from $\nodes_i$ by following relation $r$. For convenience, we include a relation type $\mathtt{self} \in \coalrel$ to denote self loops.
We refer the reader to Appendix \ref{app:coalesced_representation} for a formal definition of $\coalkg$.
The coalesced graph can be precomputed once as a preprocessing step for each question $\query$ and incurs a one-time linear cost.
In practice, however, we do not need to compute and store all the nodes in $\coalkg$ but only edge labels. This makes learning efficient because each forward/backward pass scales with the number of relation types and does not depend on the number of nodes or edges in the KG.

\paragraph{Knowledge seeking in $\coalkg$} The coalesced graph allows us to provide approximate answers to input questions in an efficient manner. Specifically, we seek $k \geq 1$ sequences of relations $\relseq^{\star}_i$, such that:
$$
    \answers \subseteq \tilde{\answers} = \bigcup_{i = 1}^{k} \reach(\seed, \relseq^{\star}_i).   
$$
%
%
We can achieve this by using a model $\relmodel$ that only considers relation sequences originating from $\seed$.
The model predicts the likelihood $\relweights: \coaledges \rightarrow [0, 1]$ of 
following a certain edge in a relation sequence from $\seed$ to $\candidates$. Then, given $\relseq = (r_1, \dots, r_{|\relseq|})$ and a node in the coalesced graph $\seed$, we can compute the likelihood of $\relseq$ by multiplying the likelihood of all edges traversed by $\relseq$ in $\coalkg$:
%
\[
\mathsf{P}(\relseq \mid \query, \coalkg, \seed) \propto \prod_{i=1}^{|\relseq|} \relweights(\reach(\seed, \relseq_{1\to i-1}), r_i, \reach(\seed, \relseq_{1\to i}) \mid \query),
\]
where $R_{1\to i} = (r_1, \dots, r_i)$ is the subsequence of $\relseq$ up to the $i$-th relation.
We generate $\candidates$ by selecting the top $k$ relation sequences $\relseq^{\star}_i$ with maximum likelihood $\mathsf{P}(\relseq^{\star}_i \mid \query, \coalkg, \seed)$.
This can be done by an efficient search algorithm, such as beam search starting from $\seed$.
Then, we compute $\candidates$ as the union of all target nodes of the selected relation sequences. More details about the knowledge-seeking algorithm are provided in Appendix $\ref{app:computational_complexity}$. 

\subsection{Refining the solution on the original KG}
\label{sec:refining}

In certain cases, like multi-hop question answering \cite{Zhang2018}, the set of candidate answers $\candidates$ may already be a reasonable estimate of $\answers$. We will substantiate this claim experimentally in Section \ref{sec:experiments}.
%
In general, however, we recover $\answers$ by using an \emph{edge-level} model $\edgemodel$ applied on a subgraph $\subkg$ of $\kg$.
Specifically, we construct $\subkg$ as the subgraph induced by the set of nodes $\subnodes$, which includes all nodes visited when following the top-$k$ relation sequences along with their neighbors (see Figure \ref{fig:overview} for an example).
%
Any existing method for KBQA can be used to instantiate $\edgemodel$ by running it on $\subkg$ rather than $\kg$. We opted to use a Graph Convolutional Network (GCN) conditioned on the input question with the same architecture as in \cite{Sun2018}. The edge-level model is constrained to predict an answer among the candidates generated by the relation-level model.

\subsection{Analysis of scalability and expressive power}
\label{sec:analysis}

This section provides a scalability analysis of our approach and shows that the relation-level model scales linearly with the number of relation types in the graph. Then, we analyse the expressive power of \ours{} and we show the class of supported logical queries.

\paragraph{Computational complexity}
As mentioned, we do not evaluate the likelihood $\relweights$ for all edges in $\coalkg$, but we generate the most likely relation sequences using a knowledge-seeking procedure based on the beam search algorithm. At any given time step, only the $\beta$ most likely relation sequences are retained and further explored at the next iteration.
Hence, the time complexity required by our algorithm is $\mathcal{O}(\tmax \cdot \beta \cdot \maxdegout(\coalkg))$, where $\tmax$ is the maximum allowed number of decoding time steps and $\maxdegout(\coalkg)$ is the maximum outdegree of $\coalkg$.
Note that $\maxdegout(\coalkg)$ is bounded by the number of relations in the graph, whereas $\tmax$ and $\beta$ are constant parameters of the algorithm and are usually small.
This gives a time complexity of:
\[
\mathcal{O}(\tmax \cdot \beta \cdot |\relations|) = \mathcal{O}(|\relations|).
\]
Hence, the knowledge-seeking algorithm scales linearly with the number of relations in the KG.
The space complexity is also $\mathcal{O}(\tmax \cdot \beta \cdot |\relations|)$.
A more detailed analysis is provided in Appendix \ref{app:computational_complexity}.

\paragraph{Expressive power}
Given a natural language question $\query$, we can represent the inferential chain needed to obtain $\answers$ from $\seed$ as a logical query $\lquery$ on $\kg$. As an example, the question in Figure \ref{fig:model_architecture}, \emph{``Who starred in films directed by George Lucas?''}, can be represented by the logical query:
$
\lquery[\ltarget] = \ltarget . \exists \mathsf{\lvar} : \mathsf{Directed}(\mathsf{George\_Lucas}, \lvar) \land \mathsf{Starred}(\lvar, \ltarget).
$
We denote with $\ltarget$ the target variable of the query and we say that $v \in \nodes$ satisfies $\lquery$ if $\lquery[v] = \mathsf{True}$. A query $\lquery$ is an \textit{existential positive first-order (EPFO) query} if it involves the existential quantification ($\exists$), conjunction ($\land$), and disjunction ($\lor$) \citep{Dalvi2012} of literals corresponding to relations in the KG. Each literal is of the form $\lrel(\lvar, \lvar')$, where
$\lvar$ is either a node in $\seed$ or an existentially quantified bound variable, and $\lvar'$ is either an existentially quantified bound variable or the target variable.
A literal $\lrel(\lvar, \lvar')$ is satisfied if $\lvar \xrightarrow{r} \lvar'$, for $r \in \relations$.
Any EPFO query can be represented in \textit{disjunctive normal form} (DNF) \cite{davey2002}, namely as a disjunction of conjunctions.
Note that, we do not consider queries with universal quantification ($\forall$), as we assume that in real-world KGs no entity connects to all the others.
Then, the following proposition holds for any knowledge graph and EPFO query.

\begin{proposition}
\label{prop:expressive_power}
Let $\kg = (\nodes, \relations, \edges)$ be a knowledge graph and $\seed \subseteq \nodes$ denote a set of entities in $\kg$. Let $\lquery$ be a valid existential positive first-order query on $\kg$ and let $n_{\lor}$ be the number of disjunction operators in the disjunctive normal form of $\lquery$. 
Then, there exist $k \leq n_{\lor} + 1$ sequences of relations $\relseq^{\star}_i \in \relations^*$ such that:
\[
\answers \subseteq \bigcup_{i = 1}^{k} \reach(\seed, \relseq^{\star}_i),
\]
where $\answers = \{v \in \nodes \mid \lquery[v] = \mathsf{True}\}$ is the denotation set of $\lquery$, namely the entities satisfying $\lquery$.
\end{proposition}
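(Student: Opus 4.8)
The plan is to reduce the claim to the case of conjunctive queries via the disjunctive normal form, and then to cover the denotation of each conjunct with a single relation sequence read off its query graph.

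First I would write $\lquery$ in disjunctive normal form, $\lquery = \lconj_1 \lor \dots \lor \lconj_m$, where each $\lconj_j$ is a conjunction of literals; since a disjunction of $m$ terms contains exactly $m-1$ occurrences of $\lor$, we have $m = n_\lor + 1$. Because the denotation of a disjunction is the union of the denotations of its disjuncts, $\answers = \bigcup_{j=1}^{m} \mathcal{A}_{\lconj_j}$, where $\mathcal{A}_{\lconj_j} = \{v \in \nodes \mid \lconj_j[v] = \mathsf{True}\}$. Hence it is enough to prove the following: for every conjunctive query $\lconj$ appearing as a disjunct, there exists $\relseq^\star \in \relations^*$ with $\mathcal{A}_{\lconj} \subseteq \reach(\seed, \relseq^\star)$. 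Taking one such sequence per conjunct and discarding those conjuncts whose denotation is empty then yields $k \leq m = n_\lor + 1$ sequences with $\answers \subseteq \bigcup_{i=1}^{k} \reach(\seed, \relseq^\star_i)$, as required.

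Next, I would fix a conjunct $\lconj$ and form its \emph{query graph} $G_{\lconj}$: its vertices are the variables occurring in $\lconj$ (the anchor constants from $\seed$, the existentially quantified bound variables, and the target $\ltarget$), and each literal $\lrel(\lvar, \lvar')$ of $\lconj$ contributes a directed edge $\lvar \xrightarrow{\lrel} \lvar'$. By the syntactic restriction on literals — anchors occur only in the first argument, $\ltarget$ only in the second — the anchors are sources and $\ltarget$ is a sink of $G_{\lconj}$, and validity of $\lquery$ guarantees that $\ltarget$ is reachable in $G_{\lconj}$ from some anchor $a \in \seed$ along a directed path $a = \lvar_0 \xrightarrow{r_1} \lvar_1 \xrightarrow{r_2} \dots \xrightarrow{r_\ell} \lvar_\ell = \ltarget$. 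Set $\relseq^\star := (r_1, \dots, r_\ell) \in \relations^*$. Now take any $v \in \mathcal{A}_{\lconj}$: by definition of the denotation there is an assignment $\sigma$ of the variables of $\lconj$ with $\sigma(\ltarget) = v$ that satisfies every literal, hence in particular $\sigma(\lvar_{i-1}) \xrightarrow{r_i} \sigma(\lvar_i)$ is an edge of $\kg$ for $i = 1, \dots, \ell$; since $\sigma(\lvar_0) = a \in \seed$ and $\sigma(\lvar_\ell) = v$, the sequence $(\sigma(\lvar_0), \dots, \sigma(\lvar_{\ell-1}), v)$ witnesses $v \in \reach(\seed, \relseq^\star)$. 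The point is that the single sequence $\relseq^\star$ depends only on $\lconj$, while only the grounding $\sigma$ varies with $v$.

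The hard part will be making precise what ``valid'' means so that the reachability claim above holds — that in each conjunct the target is reachable from an anchor along a directed path in the query graph. If some bound variable or $\ltarget$ were left ``floating'' (not lying on any directed anchor-to-target path), the denotation of that conjunct could be, for instance, the set of all nodes having an incoming edge of a fixed type, which is not of the form $\reach(\seed, \relseq)$ for any single sequence $\relseq$. Under the standard assumption that EPFO queries are \emph{anchored} — every bound variable and the target lie on some directed path from $\seed$ to $\ltarget$ — this issue does not arise, and this is the reading of ``valid'' we adopt. The remaining ingredients are routine: the count of $\lor$ operators in the DNF, the fact that denotation commutes with disjunction, and the disposal of unsatisfiable conjuncts, which together give the bound $k \leq n_\lor + 1$.
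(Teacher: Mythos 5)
Your proof is correct and follows the same top-level decomposition as the paper's: pass to the DNF, observe that the denotation distributes over the disjunction, and cover each of the $n_\lor+1$ conjuncts with a single relation sequence (the paper phrases this as an induction on $n_\lor$, which is cosmetically different from your direct union but not substantively so). Where you genuinely diverge is in the key lemma for conjunctive queries. The paper proves that a single sequence suffices for a valid conjunctive query by induction on the number of literals, distinguishing whether the newly added literal keeps or moves the target variable; your argument is instead direct: you use validity (the dependency graph is a DAG with $\seed$ as the sources and $\ltarget$ as the unique sink) to extract one directed anchor-to-target path, read off its relation labels as $\relseq^\star$, and then let the satisfying assignment $\sigma$ of an arbitrary $v$ in the denotation ground that path inside $\kg$. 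This is arguably cleaner and more robust than the paper's induction, which implicitly requires that every valid conjunctive query be obtainable by adding literals one at a time while preserving validity at each stage --- a point the paper does not justify. Your explicit discussion of what ``valid'' must guarantee (that $\ltarget$ lies on a directed path from some anchor) is exactly the right concern, and the paper's DAG-with-unique-sink definition delivers it, so there is no gap.
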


This shows that sampling $n_\lor + 1$ sequences of relations  allows generating a set of candidate answers $\candidates$ that does not miss any of the real answers $\answers$.
Then, assuming that the edge-level model $\edgemodel$ can recover $\answers$ from $\candidates$, our approach can be used to answer any EPFO query on $\kg$.
More details about the expressive power of \ours{} and the proof of Proposition \ref{prop:expressive_power} are provided in Appendix \ref{app:expressive_power}. 



\section{Architecture of the relation-level model}
\label{sec:model}
For the relation-level model $\relmodel$, we propose an auto-encoder, where the the decoder is constrained to follow sequences of relations in the coalesced representation $\coalkg$.
We train the network with weak supervision, assuming that a sequence of relations is correct if it reaches a set of candidate answers $\candidates$ that is the smallest reachable superset of the $\answers$. 
We found it useful to \textit{pretrain} the model in order to infuse knowledge from the KG. In this case, we train the model to predict a path in the KG, given the representations of the source and target nodes. More details about training strategies are given in Appendix \ref{app:training}.

The architecture of the model (see Figure \ref{fig:model_architecture}) includes three main components: a \textit{question encoder}, a \textit{relation encoder} and a \textit{graph-guided decoder}. We explain each one below.

\begin{figure*}[!t]
\centering
\includegraphics[width=0.9\linewidth]{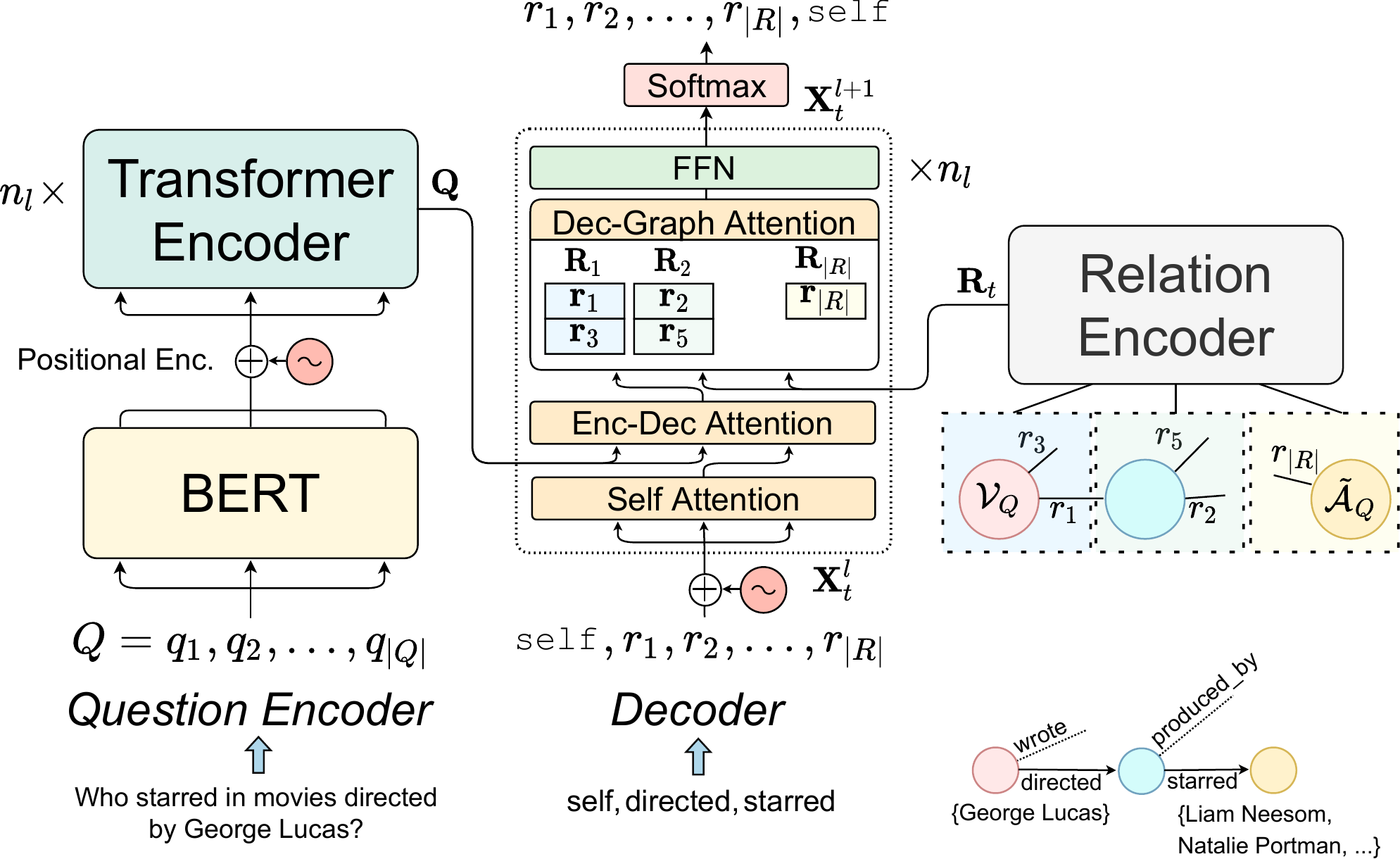}
\caption{Architecture of the \ours{} relation-level model. A question encoder is used to obtain a representation of tokens in the input natural language question. Then a graph-guided decoder is applied to obtain the likelihood of output relation sequences. The decoder is constrained to only attend to valid relations according to the structure of the coalesced KG.
}
\label{fig:model_architecture}
\end{figure*}

\paragraph{Question encoder}
The encoder receives as input a natural language question, which comprises a sequence of tokens $\query = (q_1, q_2, \dots, q_{|\query|})$.
The question is encoded using a pre-trained BERT \cite{devlin2019bert} model and processed with the same positional encoding technique used in \citep{Vaswani2017}.
The resulting embeddings are then fed into $n_{l} = 3$ transformer encoder layers \cite{Vaswani2017}.
This results in a matrix $\mathbf{Q} \in \mathbb{R}^{|\query| + 1 \times d_{\textit{model}}}$, where the first row vector is an overall representation of the whole query $Q$ (derived from the embedding of the \texttt{[CLS]} token introduced by BERT) and each remaining row represents the final $d_{\textit{model}}$-dimensional encoding of a token in the input question.

\paragraph{Relation encoder}
The relation encoder produces a representation $\mathbf{r} \in \mathbb{R}^{d_{\textit{model}}}$ for each relation type $r \in \relations$.
We decided to encode relations based on their surface form, with the same pre-trained BERT model used in the question encoder. In this case, only the embedding of the \texttt{[CLS]} token is used in order to get the final representation $\mathbf{r}$ of each relation type $r \in \relations$.
At inference time, or in case the BERT model is not fine-tuned, the embeddings of the relations can be precomputed as a preprocessing step to improve the efficiency of the approach.

\paragraph{Graph-guided decoder}
The decoder's job is to predict a sequence of relations leading from $\seed$ to $\candidates$ in $\coalkg$. At any time step $t$, it receives as input a sequence of relations $R_t = (\texttt{self}, r_1, \dots, r_{t - 1})$ and predicts the next relation $r_t$ ($\texttt{self}$ is used as a special token to denote the start of decoding). Note that the input sequence uniquely determines a node $\nodes_t$ in the graph $\coalkg$, namely the node reachable from $\seed$ by following $R_t$.
The decoder thus selects $r_t$ by choosing amongst the outgoing edges $\coal{\edges}_t$ of $\nodes_t$.
%
%
%
We use the same number of layers $n_l$ both for the question encoder and the decoder.
Let $\dechidd = [\mathbf{x}^l_0, \dots, \mathbf{x}^l_{t-1}]^{\top} \in \mathbb{R}^{t \times d_{\textit{model}}}$ denote the hidden state of the $l$-th layer of the decoder preceding time step $t$.
Note that $\mathbf{X}_t^0$ is the representation of the sequence $R_t$, obtained by using the relation encoder described above and the same positional encoding technique used in the question encoder.
For each decoder layer, we perform self-attention over the target sequence $\dechidd$ by computing:
\[
\decself = \textit{Attention}(\dechidditem, \dechidd, \dechidd),
\]
where $\textit{Attention}$ is a function that performs multi-head scaled dot-product attention \cite{Vaswani2017} with skip connections and layer normalization \cite{Ba2016}.
The above step allows each relation in the decoded sequence to attend to all the others predicted up to time step $t$.
We then let the result attend to the question as:
\[
\decenc = \textit{Attention}(\decself, \mathbf{Q}, \mathbf{Q}).
\]
This is done in order to update the current state of the decoder based on the input question.
Next, let $\mathbf{R}_t \in \mathbb{R}^{|\coal{\edges}_t| \times d_{\textit{model}}}$ denote the encoding of the relations labeling all edges in $\coal{\edges}_t$.
We constrain the decoded sequence to follow the structure of the graph by attending only to valid relations as follows:
\[
\decgraph = \textit{Attention}(\decenc, \mathbf{R}_t, \mathbf{R}_t).
\]
We get the hidden state of the next layer $\mathbf{x}_t^{l+1}$ by processing the result with a feed forward network.
The model outputs a categorical distribution $\relweights(e \mid \query) \in [0, 1]$ over the edges $e \in \coal{\edges}_t$, by applying a softmax function as follows:
\[
\relweights(\nodes_i \xrightarrow{r} \nodes_j \mid \query) = \frac{\exp(\mathbf{r}^\top \mathbf{x}_t^{n_l})}
{\sum_{\nodes_i' \xrightarrow{r'} \nodes_j' \in \coal{\edges}_t} \exp(\mathbf{r'}^\top \mathbf{x}_t^{n_l})},
\]
where $\mathbf{x}_t^{n_l}$ is the output of the final layer of the decoder, whereas $\mathbf{r}$ and $\mathbf{r}'$ denote the representations of relations $r$ and $r'$ respectively.

\section{Experiments}
\label{sec:experiments}
This section presents an evaluation of our approach with respect to both reasoning performance and scalability. 
We first show that \ours{} reaches state-of-the-art results on popular KBQA benchmarks and can generalize compositionally out of the training distribution. Then, we demonstrate the scalability of our approach on KGs with millions of nodes. We refer the reader to Appendix \ref{app:experimental_details} for more details about the experiments.

\subsection{Experimental setup}
\label{sec:setup}


\paragraph{Datasets}
We evaluate the reasoning performance of our approach on \textit{MetaQA} \citep{Zhang2018} and \textit{WebQuestionsSP} \citep{yih2015}.
\textit{MetaQA} includes multi-hop questions over the WikiMovies KB \citep{Miller2016} and we consider both 2-hop (\textbf{MetaQA 2}) and 3-hop (\textbf{MetaQA 3}) queries.
\textit{WebQuestionsSP} (\textbf{WebQSP}) comprises more complex questions answerable over a subset of Freebase \citep{freebase,Bollacker2008}, a large KG with millions of entities.
We further assess the compositional generalization ability of \ours{} on the \emph{Compositional Freebase Questions} (\textit{CFQ}) dataset \citep{Keysers2020}. Each question in \textit{CFQ} is obtained by composing primitive elements (\textit{atoms}). Whereas the training and test distribution of atoms are similar, the test set contains different \textit{compounds}, namely new ways of composing these atoms. \textit{CFQ} comprises three dataset splits (\textbf{MCD1}, \textbf{MCD2}, and \textbf{MCD3}), with maximal compound divergence (MCD) between the training and test distributions.
%
We refer the reader to Appendix \ref{app:datasets} for an extensive description of the datasets.

\paragraph{Evaluation protocol}
In our experiments on \textit{MetaQA} and \textit{WebQuestionsSP}, we assess the performance of three variants of our approach: (a) a version that only makes use of the relation-level model without the refinement step (\textbf{\ours{} -- Unrefined}), (b) a model that utilizes a key-value memory network to identify the correct answers from the candidates (\textbf{\ours{} -- KV-MemNN}), and (c) a model that uses a GNN architecture for the refinement step (\textbf{\ours{} -- GNN}), as explained in Section \ref{sec:refining}.
Following previous work \cite{Sun2018,Sun2019,Sun2020,Saxena2020}, we evaluate the models based on the \textit{Hits{@}1} metric.
On the \textit{CFQ} dataset, we evaluate the accuracy of the refined model with the GNN based on whether it predicts exactly the same answers given by the corresponding SPARQL query.

\subsection{Main results}

\paragraph{KBQA Performance}
Table \ref{tab:results} summarizes the results of our experiments on the two benchmark datasets.
For the two multi-hop \textit{MetaQA} datasets, we achieve state-of-the-art performance by only using the relation-level model of \ours{}.  
As shown in Table \ref{tab:results}, \ours{} outperforms all the baselines on \textbf{MetaQA 3}, demonstrating the ability of our approach to perform multi-hop reasoning over a KG.
%
For the more complex questions in the \textit{WebQuestionsSP} dataset, the unrefined \ours{} model achieves better performance than all but one (\textbf{EmQL}) of the baselines.
To achieve such performance, however, EmQL creates a custom set of logical operations tailored towards the specifics of the target KG and the kind of questions in the dataset, while our approach is agnostic with respect to such details. 
Combining the relation and edge-level models improves the performance on \textbf{WebQSP}. In particular, \textbf{\ours{} -- GNN} outperforms all considered baselines on the three datasets. 

\begin{table}[!htb]
\caption{Hits@1 on \textit{MetaQA} and \textit{WebQuestionsSP}}
\centering
\label{tab:results}
\resizebox{0.65\textwidth}{!}{%
\begin{tabular}{@{}lABC@{}}
\toprule
 & \textbf{MetaQA 2} & \textbf{MetaQA 3} & \textbf{WebQSP} 
\EndTableHeader \\
\midrule
\textbf{KV-MemNN} \citep{Miller2016} & 82.7 & 48.9 & 46.7 \\
\textbf{GRAFT-Net} \citep{Sun2018} & 94.8 & 77.7 & 70.3 \\
\textbf{ReifKB + mask} \citep{Cohen2020} & 95.4 & 79.7 & 52.7 \\
\textbf{PullNet} \citep{Sun2019} & 99.9 & 91.4 & 69.7 \\
\textbf{EmbedKGQA} \citep{Saxena2020} & 98.8 & 94.8 & 66.6 \\
\textbf{EmQL} \citep{Sun2020} & 98.6 & 99.1 & 75.5 \\
\midrule
\textbf{\ours{} -- Unrefined} & 99.9 & 99.9 & 70.6 \\
\textbf{\ours{} -- KV-MemNN} & 99.9 & 99.9 & 72.1 \\
\textbf{\ours{} -- GNN} & 99.9 & 99.9 & 76.1 \\ \bottomrule
\end{tabular}%
}
\end{table}

\paragraph{Compositional generalization}
In order to evaluate the compositional generalization ability of \ours{}, we performed additional experiments on the \emph{CFQ} dataset.
Table \ref{tab:cfq} shows the accuracy on the three MCD splits and the mean accuracy (\textbf{MCD-mean}) in comparison to the other methods in the leaderboard. Note that the other approaches address a semantic parsing task and require additional supervision, as they are trained to predict the target query. On the other hand, we aim to predict directly the set of answers to the input question.  
The experiment shows that \ours{} is able to achieve compositional generalization with an accuracy comparable to the state-of-the-art model on \textit{CFQ} for semantic parsing.

\begin{table}[!htb]
\centering
\caption{Accuracy and $95\%$ confidence interval on the \textit{CFQ} dataset}
\label{tab:cfq}
\resizebox{\textwidth}{!}{
\begin{tabular}{@{}lDDDD@{}}
\toprule
 & \textbf{MCD1} & \textbf{MCD2} & \textbf{MCD3} & \textbf{MCD-mean}
\EndTableHeader \\
\midrule
\textbf{LSTM + Attention} \citep{Keysers2020,hochreiter1997long,Bahdanau2015} & 0.289 $\pm$ 0.018 & 0.050 $\pm$ 0.008 & 0.108 $\pm$ 0.006 & 0.149 $\pm$ 0.011 \\
\textbf{Transformer} \citep{Keysers2020,Vaswani2017} & 0.349 $\pm$ 0.011 & 0.082 $\pm$ 0.003 & 0.106 $\pm$ 0.011 & 0.179 $\pm$ 0.009 \\
\textbf{Universal Transformer} \citep{Keysers2020,Dehghani2019} & 0.374 $\pm$ 0.022 & 0.081 $\pm$ 0.016 & 0.113 $\pm$ 0.003 & 0.189 $\pm$ 0.014 \\
\textbf{Evolved Transformer} \citep{Furrer2020,So2019} & 0.424 $\pm$ 0.010 & 0.093 $\pm$ 0.008 & 0.108 $\pm$ 0.002 & 0.208 $\pm$ 0.007 \\
\textbf{T5-11B} \citep{Raffel2020,Furrer2020} & 0.614 $\pm$ 0.048 & 0.301 $\pm$ 0.022 & 0.312 $\pm$ 0.057 & 0.409 $\pm$ 0.043 \\
\textbf{T5-11B-mod} \citep{Furrer2020,Guo2019} & 0.616 $\pm$ 0.124 & 0.313 $\pm$ 0.128 & 0.333 $\pm$ 0.023 & 0.421 $\pm$ 0.091 \\
\textbf{HPD} \citep{Guo2020} & 0.720 $\pm$ 0.075 & 0.661 $\pm$ 0.064 & 0.639 $\pm$ 0.057 & 0.673 $\pm$ 0.041 \\
\textbf{\ours{} -- GNN} & 0.734 $\pm$ 0.039 & 0.653 $\pm$ 0.040 & 0.627 $\pm$ 0.045 & 0.671 $\pm$ 0.041 \\ \bottomrule
\end{tabular}
}
\end{table}

\paragraph{Subgraph extraction}

We analyzed the candidate solutions produced by the relation-level model in
order to evaluate the suitability of our approach to building small question subgraphs that are likely to
contain the answers to a natural language question.
For this purpose, we computed the precision and recall of the
set of candidate answers with varying number of relation sequences sampled by the relation-level
model. Figure \ref{fig:attention_plots} shows the top relation sequences predicted by the relation-level model on two questions from the test set of \textit{WebQuestionsSP}. The precision and recall curves are shown in Figure \ref{fig:paths_prec_recall}. As expected, on \textit{MetaQA} the recall
is high for all values of $k$, because selecting the most likely sequence of relations is sufficient to
solve the multi-hop question answering task. On \textit{WebQuestionsSP}, only 3 sequences of relations are
sufficient to obtain a recall of $0.91$, and we can improve it to $0.95$ by generating still small subgraphs
consisting of only 10 sequences of relations.

\begin{figure*}[!t]
    \centering
    \includegraphics[width=\linewidth]{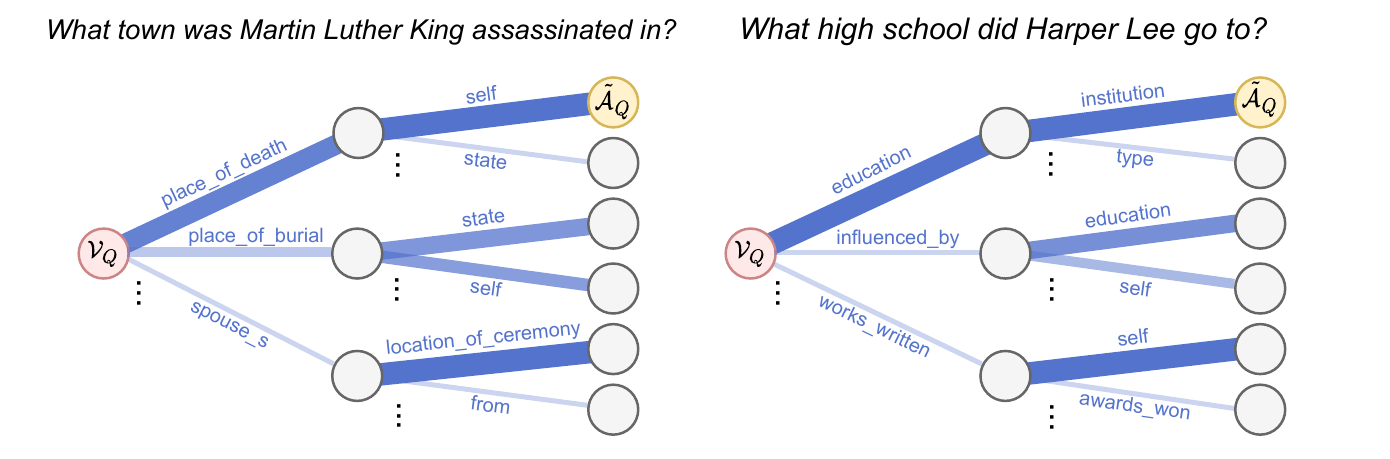}
    \caption{Attention weights given by the relation-level model to the edges of the coalesced graph for two questions in \textit{WebQuestionsSP}. Thicker and darker edges represent higher attention weights.}
    \label{fig:attention_plots}
\end{figure*}

\begin{figure*}[t]
\begin{subfigure}{0.33\linewidth}
\centering
\includegraphics[width=\linewidth]{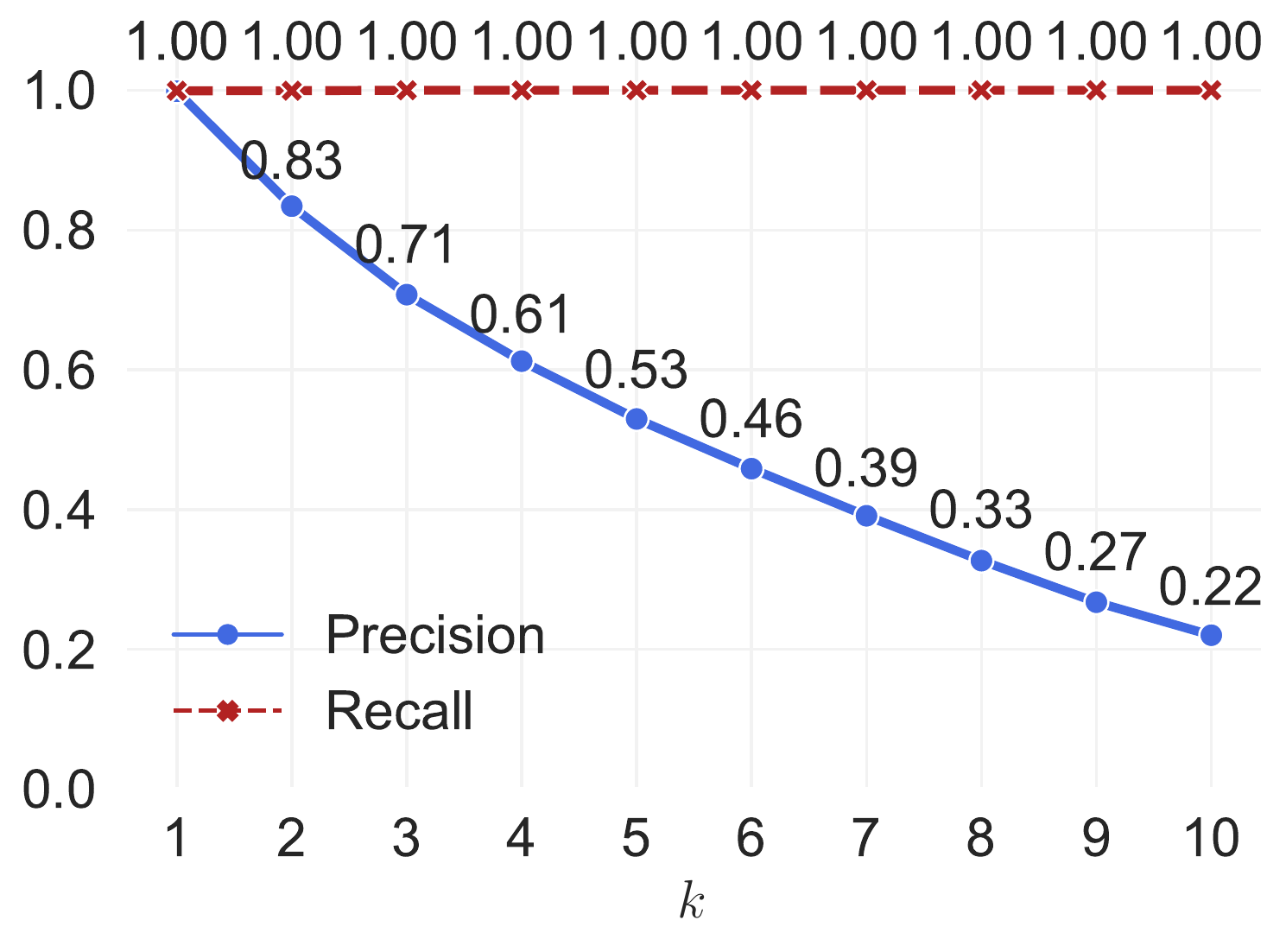}
\end{subfigure}
\begin{subfigure}{0.33\linewidth}
\centering
\includegraphics[width=\linewidth]{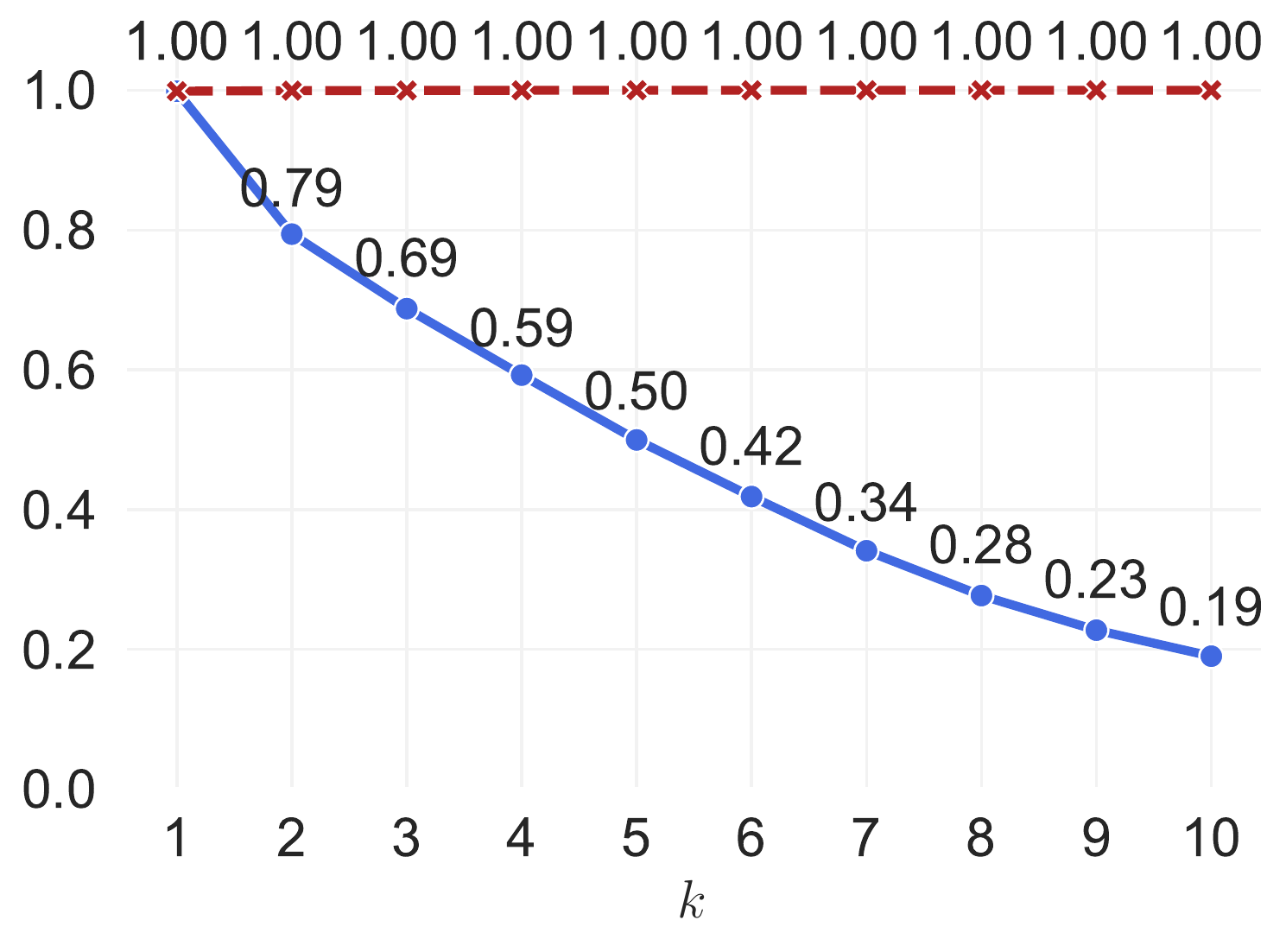}
\end{subfigure}
\begin{subfigure}{0.33\linewidth}
\centering
\includegraphics[width=\linewidth]{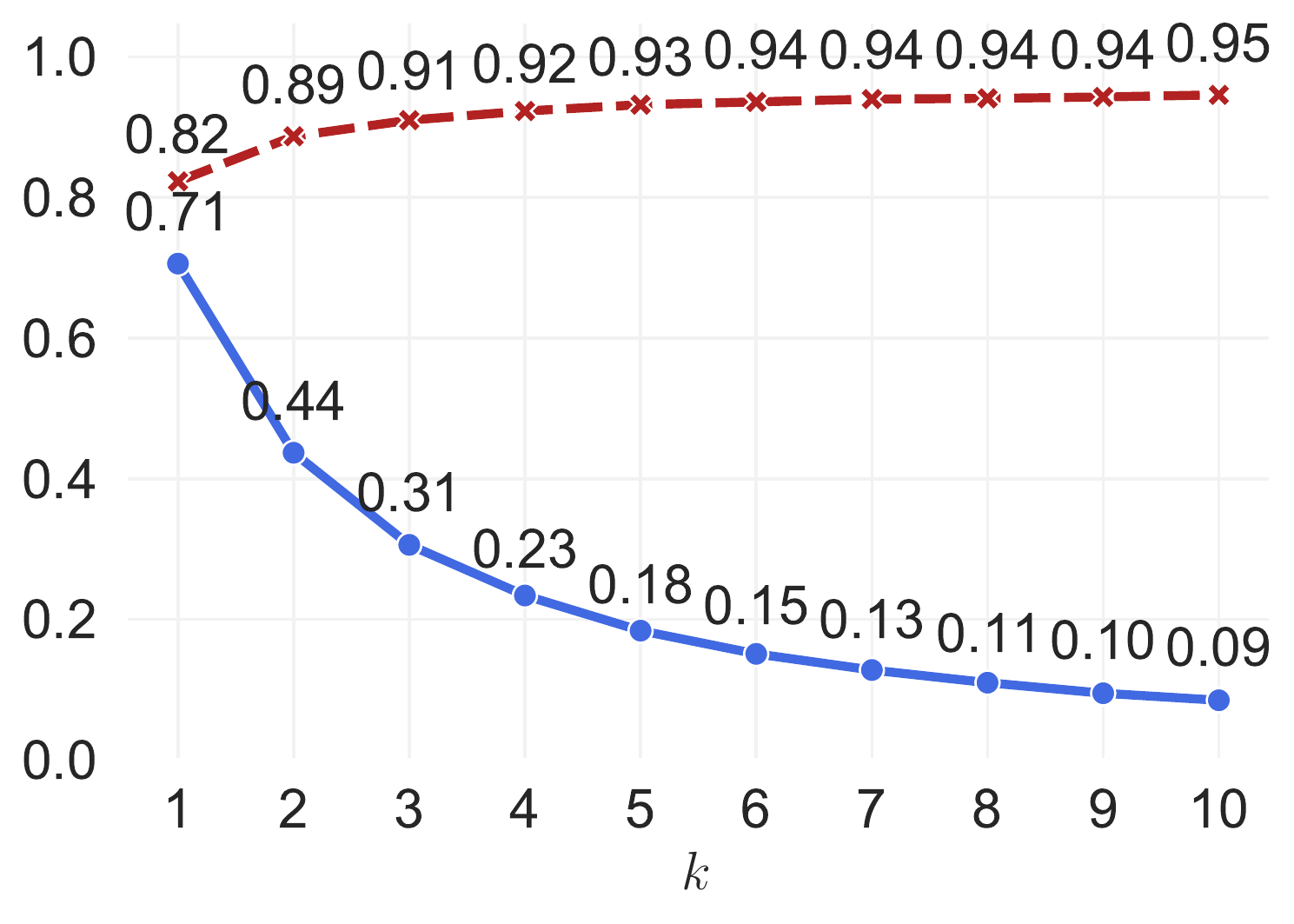}
\end{subfigure}
\caption{Precision and recall of the top $k$ sequences of relations on MetaQA 2 (left), MetaQA 3 (center) and WebQSP (right)}
\label{fig:paths_prec_recall}
\end{figure*}

\subsection{Efficiency and scalability}
\label{sec:scalability}

We analyze the efficiency of our approach on synthetic KBs (as in \citep{Cohen2017,Cohen2020}) and then compare the scalability of different preprocessing methods on the KGs of \textit{MetaQA} and \textit{WebQuestiontsSP}.
First, we perform experiments on KBs where the relational coalescing has no effect: the outdegree of each node is equal to the number of relation types and all edges originating from a node have different relation labels.
We perform two experiments on such KBs. In the first one (Figure \ref{fig:scalability_entities}), the number of relation types is fixed to $|\relations| = 10$ and the number of entities varies from $|\nodes| = 10^2$ to $|\nodes| = 10^6$. 
In the second task (Figure \ref{fig:scalability_relations}), the number of entities is fixed to $|\nodes| = 5000$ and the number of relations varies from $|\relations| = 1$ to $|\relations| = 10^3$.
The single answer node is always two-hops away from the entities mentioned in the question.
We compare \ours{} (unrefined) against a GNN-based approach (\textbf{GRAFT-Net} \cite{Sun2018}) and a key-value memory network (\textbf{KV-MemNN} \cite{Miller2016}). 
The approaches are evaluated based on the queries per second at inference time with a mini-batch size of 1.
The results show that increasing the number of entities has negligible impact on the performance of \ours{}, whereas GRAFT-Net and the key-value memory network are limited to graphs with less than 10k nodes.
This shows that, in large KGs like Freebase, the baselines would not be able to handle even a 2-hop neighborhood of the entities mentioned in the question (we refer the reader to Appendix \ref{app:scalability_coalescing} for more details).
Finally, from the results in Figure \ref{fig:scalability_relations}, we see that the throughput of our approach decreases with the number of relation types. 
However, in practice, we can leverage the GPU to score the edges of the graph in parallel. This is why we observe only a minor drop in performance when the number of relation types grows from $|\relations| = 1$ to $|\relations| = 100$.

In order to assess the scalability of the proposed relational coalescing operation, we further compare commonly used preprocessing methods on the KG of \textit{WebQuestionsSP}. We evaluate the time required to extract complete 2-hop neighborhoods of the entities mentioned in the question and the time to perform Personalized Page Rank (PPR) on such graphs. The results are shown in Figure \ref{fig:scalability_preprocessing}. Note that, at inference time, we can perform the coalescing only on the portion of the graph explored by the model, which makes \ours{} much more efficient. At training time, the preprocessing is comparable to the 2-hop neighborhood extraction. Finally, Figure \ref{fig:scalability_edges} shows the performance of the models with the respective preprocessing step at inference time on synthetic KBs with growing number of edges.

\begin{figure}[!tb]
\centering
\begin{subfigure}{0.245\linewidth}
\centering
\includegraphics[width=\linewidth]{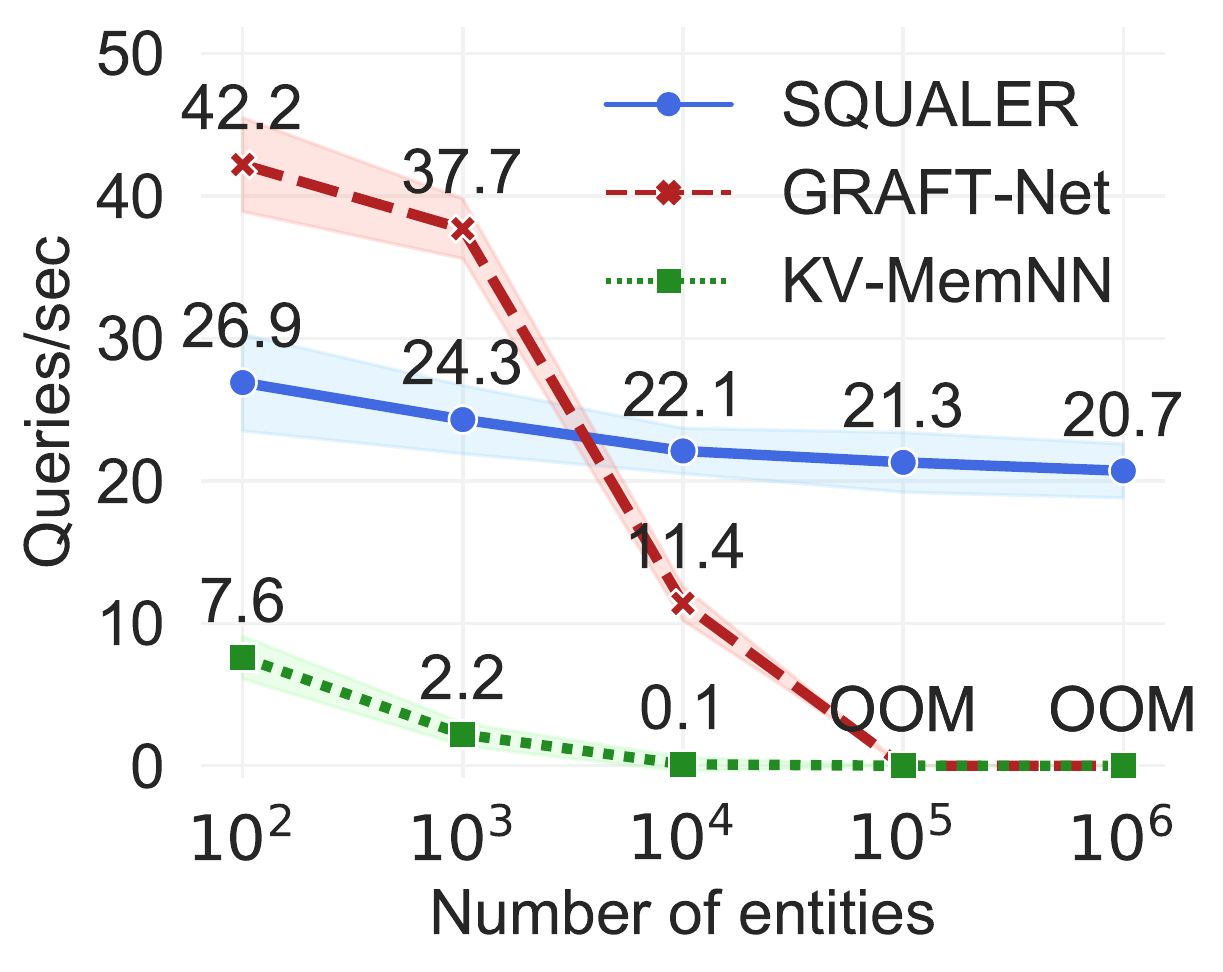}
\caption{}
\label{fig:scalability_entities}
\end{subfigure}
\begin{subfigure}{0.245\linewidth}
\centering
\includegraphics[width=\linewidth]{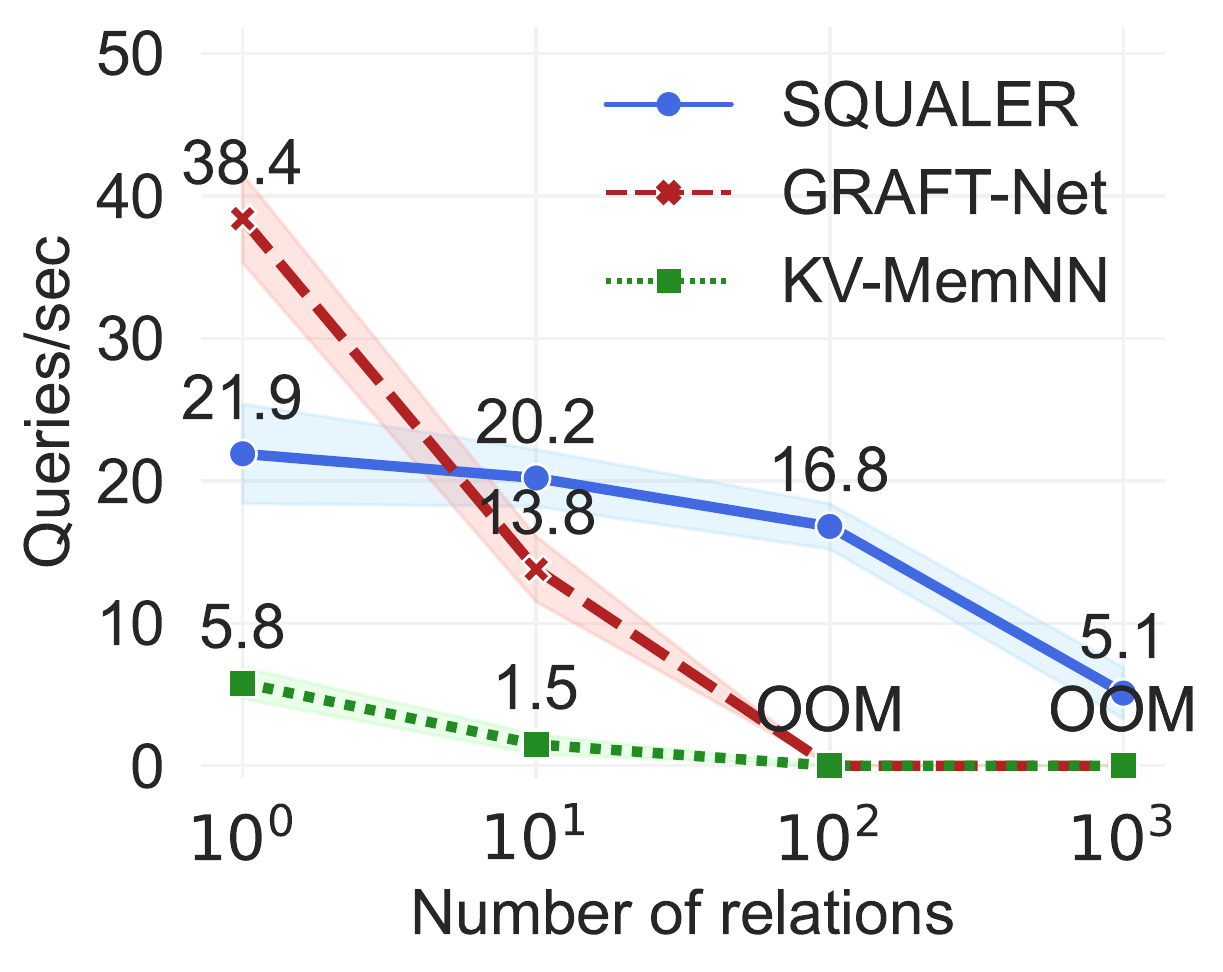}
\caption{}
\label{fig:scalability_relations}
\end{subfigure}
\begin{subfigure}{0.245\linewidth}
\centering
\includegraphics[width=\linewidth]{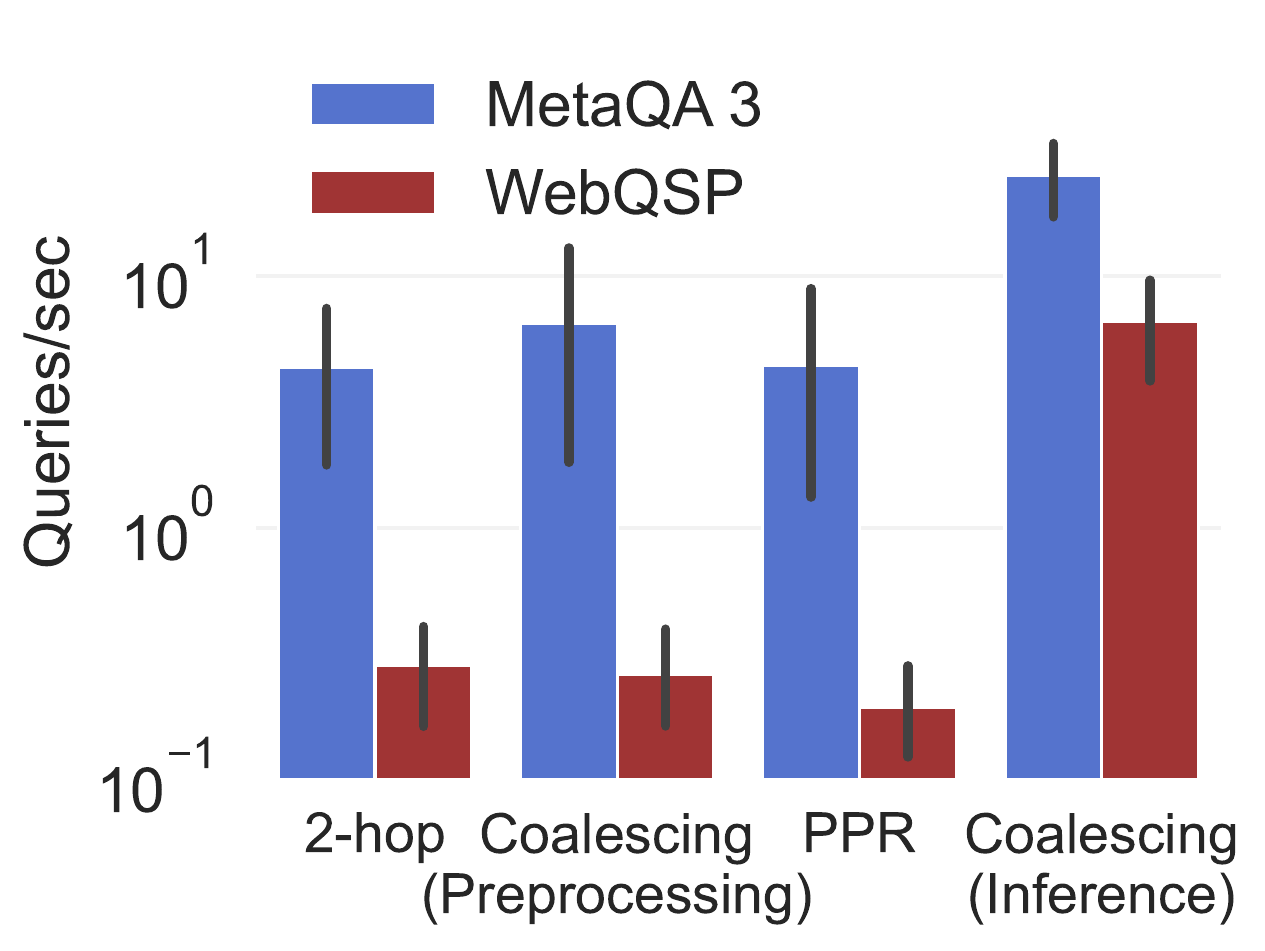}
\caption{}
\label{fig:scalability_preprocessing}
\end{subfigure}
\begin{subfigure}{0.245\linewidth}
\centering
\includegraphics[width=\linewidth]{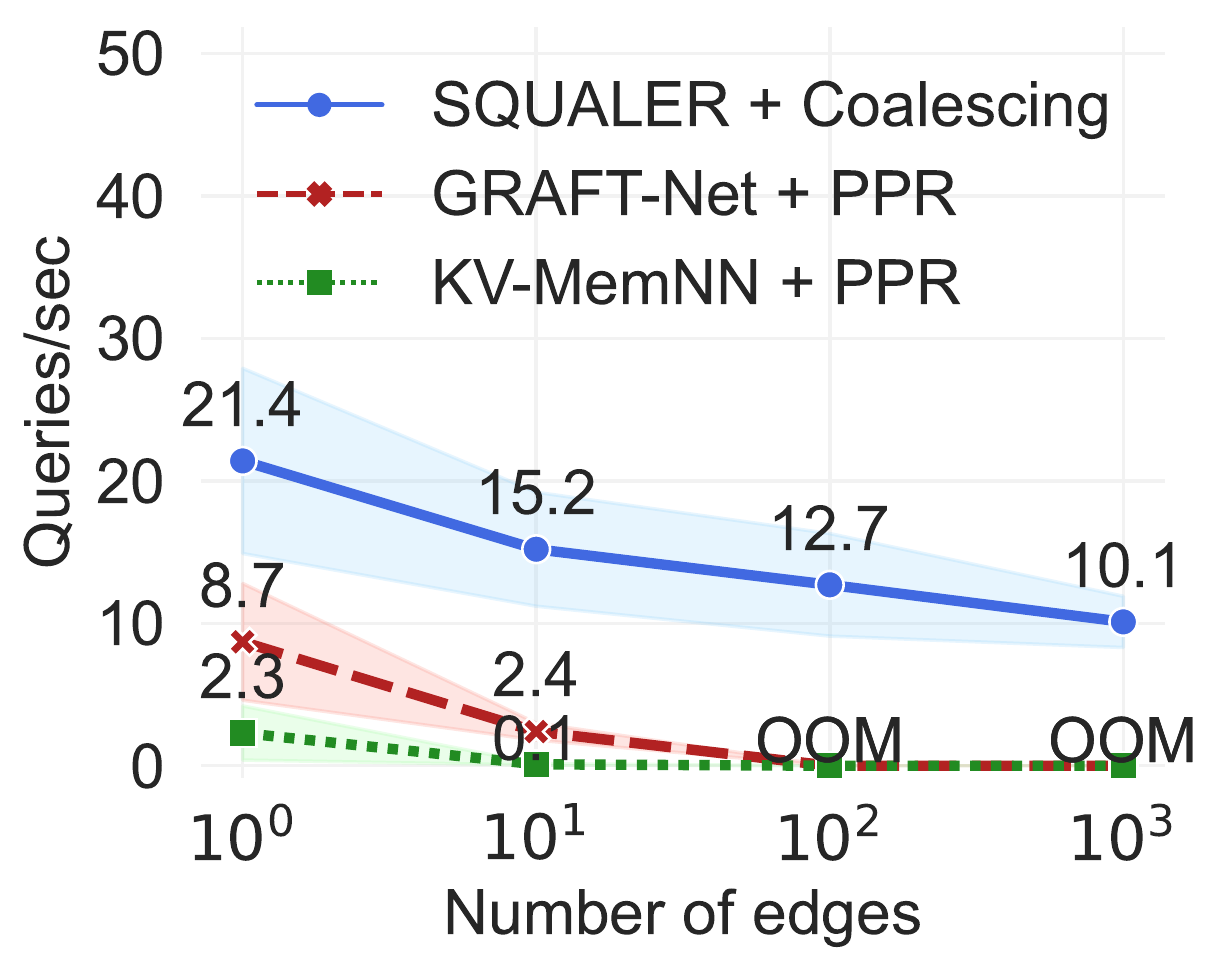}
\caption{}
\label{fig:scalability_edges}
\end{subfigure}
\caption{Inference time in queries/sec on synthetic KBs with increasing number of entities (a) and relation types (b). Time required by different preprocessing steps on the KG of \textit{WebQuestionsSP} and \textit{MetaQA} (c).  Complete inference and preprocessing time on synthetic KBs with increasing number of edges (d). We set the queries/sec to 0 when the model runs out-of-memory (OOM).}
\label{fig:scalability}
\end{figure}

\subsection{Incomplete knowledge graphs}
In order to evaluate the capability of our approach to cope with missing information in the knowledge graph, we performed two additional experiments. In the first experiment, we evaluated our approach (the \ours{} \textbf{-- GNN} variant) on \textit{WebQuestionsSP} using incomplete knowledge graphs with only 50\% of the original edges (\textbf{50\% KG}). Then, following previous work \citep{Sun2018,Sun2019}, we tried to mitigate the missing information using additional sources of external knowledge. In particular, for each question, we used the same text documents extracted from Wikipedia as done by \citet{Sun2018} (\textbf{50\% KG + Text}). In this experiment, the relation-level model is unaware of the additional source of knowledge, but the information from the text documents is infused into the edge-level GNN with the same strategy used in GRAFT-Net \citep{Sun2018} (note that this makes the edge-level GNN-based model essentially equivalent to the full version of GRAFT-Net, with both KG and text support).
We compare our approach against \textbf{GRAFT-Net} and \textbf{PullNet}, namely the two baselines designed for open-domain question answering with incomplete KGs and text documents.

The results of the experiments are reported in Table \ref{tab:incomplete}. We observe that, despite not being designed for incomplete KGs, \ours{} outperforms the baselines on both experimental settings.
This is not surprising, as \textbf{GRAFT-Net} relies on a simple heuristic process to construct question subgraphs and \textbf{PullNet} is constrained to follow the structure of the incomplete graph, because its iterative retrieval process can only expand nodes that are reachable from the set of anchor entities. This means that, in principle, any node retrieved by \textbf{PullNet}'s iterative process can also be reached by \ours{}’s relation-level model.
Similarly to the baselines, we note only a minor gain in performance when using the text documents as an additional source of information.

\begin{table}[!htb]
\caption{Hits@1 on \textit{WebQuestionsSP} with incomplete KGs (50\% of the edges) and additional text}
\centering
\label{tab:incomplete}
\resizebox{0.475\textwidth}{!}{%
\begin{tabular}{@{}lEE@{}}
\toprule
 & \textbf{50\% KG} & \textbf{50\% KG + Text} 
\EndTableHeader \\
\midrule
\textbf{GRAFT-Net} \citep{Sun2018} & 48.2 &	49.9 \\
\textbf{PullNet} \citep{Sun2019} & 50.3	& 51.9 \\
\textbf{\ours{} -- GNN} & 53.5 & 55.2 \\ \bottomrule
\end{tabular}%
}
\end{table}

\section{Related work}
Several lines of research in the past few years have focused on introducing deep learning approaches aimed at reasoning over structured knowledge.
In particular, this paper is closely related to methods for learning to traverse KGs \cite{Das2018,Das2017,Guu2015} and recent works on answering conjunctive queries using deep learning approaches \citep{Hamilton2018,Daza2020}.
In this context, several KB and query embedding methods have been proposed \citep{Wang2017}.
Many KB embedding approaches support the same operation performed by our relation-level model, namely relation projection \citep{Cohen2020,Sun2020,Hamilton2018,Ren2020}. Some KB embedding methods also explicitly learn to follow chains of relations and traverse KGs \citep{Guu2015,Lin2015,Das2017}.
Notably, Query2Box \citep{Ren2020} is a query embedding method that represents sets using box embeddings and the more recent beta embeddings \cite{Ren2020beta} extend the framework to support a complete set of first-order logic operators.
The main difference with our model is that these methods operate on vector space, whereas our approach is constrained on the graph structure and learns to traverse the KG while keeping the ability to scale to large graphs. Also, our method answers questions in natural language, while the above methods are primarily designed for query answering.
Recently, \citet{Sun2020} introduced EmQL, a query embedding method which has also been integrated in a question answering model.

Other lines of research on KBQA have focused on unsupervised semantic parsing \citep{atzeni2018what,atzeni2018translating,atzeni2018towards} or on the introduction of supervised models, like graph neural networks (GNNs) designed for reasoning over knowledge graphs \cite{Sun2018,Sun2019,yasunaga2021}.
These approaches pose the KBQA problem as a node classification task. For this reason, they have been applied succesfully only on small query-dependent graphs.
\citet{Cohen2020} addressed the problem of creating a representation of a symbolic KB that enables building neural KB inference modules that are scalable enough to perform non-trivial inferences with large graphs.
Another recent work \cite{Saxena2020} has explored using KG embeddings for question answering and handle incompleteness in the KG.
%


In our work, we combine relation projection with an edge-level GNN to address the KBQA problem. The same idea of combining GNNs with relational following was introduced in Gretel \citep{cordonnier2019}, which learns to complete natural paths in a graph given a path prefix.
Also, our idea of accelerating GNNs by operating on a reduced graph representation has strong connections with graph coarsening and sparsification \citep{loukas2019,loukas2018,Batson2013}.

Methods based on reinforcement learning (RL) have also been proposed to perform multi-hop reasoning over knowledge graphs.
\citet{Xiong2017} proposed DeepPath, which
relies on a policy-based agent that
learns to reason over multi-hop paths by sampling relations at each step.
Also, \citet{Das2018} introduced MINERVA, a RL agent that learns how to navigate the graph conditioned only on an input entity and on a query.
These approaches are designed for simple query answering and KB completion rather than KBQA.
A main difference with our work is that \ours{} samples multiple paths and employs an edge-level model to reach higher expressivity.

\section{Conclusion}
This paper introduced \ours{}, a scalable approach to reasoning and question answering over KGs.
Our method is expressive and can reach state-of-the-art performance on widely used and challenging datasets. Further, \ours{} scales with the number of (distinct) relation types in the graph and can effectively handle large-scale knowledge graphs with millions of entities. Our empirical evaluation also showed that our approach can generalize compositionally and that it can be used to generate question-dependent subgraphs that strike a good trade-off between precision and recall.

Overall, our work proposes an improvement to existing KBQA technology which carries impact to several practical applications. Nevertheless, we remind that the deployment of such models needs to be done cautiously. KBQA replaces a mature technology (traditional KBs and query languages) with less understood methods.
The underlying KB may be incomplete, contain misinformation or biases that could negatively affect the decisions of the learned model. We hope that our work will spur further research in this area and contribute to the development of reliable KBQA systems.

\acksection
Andreas Loukas would like to thank the Swiss National Science Foundation for supporting him in the context of the project “Deep Learning for Graph Structured Data”, grant number PZ00P2 179981. 

\nocite{Bordes2013,han2018openke,Loshchilov2019}

\bibliographystyle{plainnat}
\bibliography{references}

\newpage

\newcommand{\toptitlebar}{
  \hrule height 4pt
  \vskip 0.25in
  \vskip -\parskip%
}
\newcommand{\bottomtitlebar}{
  \vskip 0.29in
  \vskip -\parskip
  \hrule height 1pt
  \vskip 0.09in%
}

\vbox{%
    \hsize\textwidth
    \linewidth\hsize
    \vskip 0.1in
    \toptitlebar
    \centering
    {\LARGE\bf Appendix\par}
    \bottomtitlebar
    \vskip 0.25in
  }

\appendix

\section{Formal definition of the coalesced representation}
\label{app:coalesced_representation}
Given a knowledge graph $\kg = (\nodes, \relations, \edges)$ and a set of entities $\seed$,
we can provide an alternative recursive definition of $\reach(\seed, \relseq)$ as:
\[
\reach(\seed, (r_1, r_2, \dots, r_{|\relseq|})) =
\begin{cases}
\seed \hspace{5mm} & \text{if } |\relseq| = 0  \vspace{1mm} \\
\reach(\seed', (r_2, \dots, r_{|\relseq|})) & \text{if } \seed \xrightarrow{r_1} \seed' \\
\emptyset & \text{otherwise}
\end{cases}
\]
where $\seed'$ is the set of nodes reachable from $\seed$ by an $r_1$ relation.

Then, we can define the coalesced representation
$
\coalkg = (\coalnodes, \coalrel, \coaledges)
$
as follows:
\begin{itemize}
\item $\coalnodes = \{\reach(\seed, R) \mid R \in \relations^* \}$ are the nodes $R$-reachable from $\seed$ by $\relseq \in \relations^*$ ($*$ is the Kleene star);
\item $\coalrel = \relations \cup \{\mathtt{self}\}$ is the original set of relations augmented with the self-loop relation type $\mathtt{self}$, which denotes the empty sequence $\mathtt{self} \in \relations^*$;
\item edge $\nodes_i \xrightarrow{r} \nodes_j$ belongs to $\coaledges$ if and only if $\nodes_j = \reach(\nodes_i, r)$, with $r \in \coalrel$.
\end{itemize}
Intuitively, this operation can be seen as coalescing relations in the original knowledge graph $\kg$ and adding self loops.
In practice, we do not need to compute all the nodes in $\coalkg$ but only edge labels.

\section{Computational Complexity}
\label{app:computational_complexity}

The knowledge seeking procedure described in Section \ref{sec:seeking} applies a search algorithm over the graph $\coalkg$ to obtain the most likely set of relation sequences originating from $\seed$.
The exact knowledge seeking procedure adopted in our experiments is based on the beam search algorithm and is detailed in Algorithm \ref{alg:knowledge_seeking}.
The algorithm is designed to scale with the number of relation types in the original knowledge graph, which is usually much smaller than the number of edges (facts) or nodes (entities). In this section, we describe the algorithm in more details and we provide an extensive analysis of the computational complexity of our approach.

\begin{algorithm}[!b]
\caption{Knowledge Seeking}
\label{alg:knowledge_seeking}
\SetKwInOut{Input}{Input}
\SetKwInOut{Output}{Output}
\SetKw{KwAnd}{and}
\SetKw{KwOr}{or}
\DontPrintSemicolon 

\Input{a coalesced knowledge graph $\coalkg$; a set of starting entities $\seed$; the beam width $\beta$; the maximum number of iterations $\tmax$; and the number of relation sequences to be returned $k \leq \beta$}
\Output{A set of $k$ tuples of the form $(\candidates, \relseq, w)$, representing the $k$ most likely candidate answers $\candidates$, the sequence of relations $\relseq$ to reach $\candidates$, and the negative log-likelihood $w$ of $\relseq$}

\algnewcommand{\IfThenElse}[3]{
  \algorithmicif\ #1\ \algorithmicthen\ #2\ \algorithmicelse\ #3}

\medskip
$t \gets 1$\;
$\beams_t \gets \{(\seed, \mathtt{self}, 0)\}$\;
\medskip
\Repeat{$\beams_t = \beams_{t - 1}$ \KwOr $t > \tmax$}{
    $\beams_{t + 1} \gets \emptyset$\;
    \For{$(\nodes_t, \relseq_t, w_t) \in \beams_t$}{
        \uIf{$\relseq_t = (r_0, \dots, \mathtt{self})$ \KwAnd $t > 1$}{
        $\beams_{t + 1} \gets \beams_{t + 1} \cup \{(\nodes_t, \relseq_t, w_t)\}$\;}
        \Else{
        $\coal{\edges}_t \gets \{\nodes_t \xrightarrow{r_t}{\nodes_{t + 1}} \in \coaledges\}$\;
        \For{$\nodes_t \xrightarrow{r_t} \nodes_{t + 1} \in \coal{\edges}_t$}{
            $\relseq_{t + 1} = (\relseq_t, r_t)$\;
            $w_{t + 1} \gets w_t - \log \relweights(\nodes_t \xrightarrow{r_t} \nodes_{t + 1})$\;
            $\beams_{t + 1} \gets \beams_{t + 1} \cup \{(\nodes_{t+1}, \relseq_{t+1}, w_{t+1})\}$\;
        }
        }
    }
     $\beams_{t + 1} \gets \mathsf{min}(\beams_{t + 1}, \beta)$\;
        $t \gets t + 1$
    
}

\medskip

\Return $\mathsf{min}(\beams_t, k)$

\end{algorithm}

\paragraph{Overview of the knowledge seeking procedure}
At each iteration, Algorithm \ref{alg:knowledge_seeking} updates a set $\beams_t$ containing triples of the form $(\nodes_t, \relseq_t, w_t)$. We denote with $\nodes_t = \reach(\seed, \relseq_t)$ the set of nodes reachable from $\seed$ by following $\relseq_t$, whereas $\relseq_t$ represents a relation sequence constructed iteratively by applying the relation-level model on edges of $\coalkg$ up to time step $t$. The last element of the tuples $w_t$ is the total accumulated negative log-likelihood of $\relseq_t$, computed as explained in Section \ref{sec:seeking}.
At the beginning of the algorithm, $\nodes_1 = \seed$ is the set of entities mentioned in the natural language question, $\relseq_1 = \mathtt{self}$ is the empty relation sequence and we set the initial negative log-likelihood $w_1 = 0$.
The algorithm receives as input a parameter $\beta$ which specifies the \textit{beam width}, namely the number of relation sequences that are expanded at each iteration.
At time step $t$, we compute the set $\coal{\edges}_t$ of all edges originating from $\nodes_t$ in $\coalkg$.
Then, the relation sequences $R_t$ are expanded with the relation types labeling edges in $\coal{\edges}_t$. The likelihood of the new relation sequences is calculated based on $w_t$ and the likelihood assigned by the relation-level model to the relation type appended to $\relseq_t$.
At the end of each iteration, the function $\mathsf{min}(\beams_{t + 1}, \beta)$ in Algorithm \ref{alg:knowledge_seeking} retains for the next time step only the $\beta$ tuples $(\nodes_{t + 1}, \relseq_{t + 1}, w_{t + 1}) \in \beams_{t + 1}$ with the minimum negative log-likelihood $w_{t + 1}$.
Note that, in Algorithm \ref{alg:knowledge_seeking}, relation sequences ending with the $\mathtt{self}$ relation type are not expanded after the first time step. As explained in Section \ref{sec:model}, indeed, the $\mathtt{self}$ relation type is used to signal both the start and the end of the decoding.

\paragraph{Time complexity}
At time step $t$, for each triple $(\nodes_t, \relseq_t, w_t) \in \beams_t$, the algorithm computes $\relweights$ for all edges $\coal{\edges}_t$ originating from $\nodes_t$. 
This means that the relation-level model described in Section \ref{sec:model} is queried $|\beams_t| \cdot |\coal{\edges}_t|$ times at iteration $t$.
Note that we do not need to compute the likelihood $\relweights(\nodes_i \xrightarrow{r} \nodes_j)$ for all edges $\nodes_i \xrightarrow{r} \nodes_j$ in $\coaledges$.
Let $\maxdegout(\coalkg)$ be the maximum outdegree of nodes in $\coalkg$.
At time step $t$, the size of the set $\beams_{t+1}$ is restricted to $\beta$ for the next iteration by the operation $\mathsf{min}(\beams_{t + 1}, \beta)$. Since $|\beams_t|$ is bounded by $\beta$ and $|\coal{\edges}_t|$ is bounded by $\maxdegout(\coalkg)$, at any iteration, the relation-level model is queried at most $\beta \cdot \maxdegout(\coalkg)$ times. Each of such queries takes constant time.
The function $\mathsf{min}(\beams_{t+1}, \beta)$ selects the $\beta$ tuples in $\beams_{t + 1}$ with the smallest negative log-likelihood. This can be done on average in $\mathcal{O}(|\beams_{t + 1}|)$ time.
At iteration $t$, the set $\beams_{t+1}$ is initialized as the empty set and updated by adding at most $\beta \cdot \maxdegout(\coalkg)$ tuples (one element for each query to the relation-level model). Therefore, the expected time complexity of the function $\mathsf{min}(\beams_{t+1}, \beta)$ is $\mathcal{O}(\beta \cdot \maxdegout(\coalkg))$.
Now, note that by the definition of $\coalkg$, we have $\maxdegout(\coalkg) \leq |\coalrel| = |\relations| + 1$. Hence, the number of queries to the relation-level model is bounded by $\beta \cdot (|\relations| + 1)$ and the time complexity of $\mathsf{min}(\beams_{t+1}, \beta)$ is also $\mathcal{O}(\beta \cdot |\relations|)$.
The maximum depth reached by the knowledge seeking procedure starting from $\seed$ is bounded by $\tmax$, because Algorithm \ref{alg:knowledge_seeking} performs at most $\tmax$ iterations of the main outer loop.
The final step $\mathsf{min}(\beams_t, k)$ selects the $k$ most likely tuples and can be run on average in $\mathcal{O}(\beta)$ time.
This yields a final computational complexity of
\[
\mathcal{O}(\tmax \cdot \beta \cdot |\relations|) = \mathcal{O}(|\relations|).
\]
Note that $\tmax$ and $\beta$ are constant parameters of the algorithm and are usually small. In our experiments, we set $\tmax = 3$ for MetaQA 3 and $\tmax = 2$ for MetaQA 2 and WebQSP. We set the beam width $\beta = 10$, obtaining only minor improvements with respect to a greedy search with $\beta = 1$.
Therefore, we obtain that that time complexity of the knowledge seeking procedure scales linearly with the number of relation types and does not depend on the number of nodes or edges in $\kg$. 

\paragraph{Space complexity}
For each iteration $t$, Algorithm \ref{alg:knowledge_seeking} constructs $\beams_{t + 1}$ by analyzing all edges originating from each node $\nodes_t$ stored in the tuples $(\nodes_t, \relseq_t, w_t) \in \beams_t$. From the considerations reported above, the size of $\beams_{t + 1}$ is $\mathcal{O}(\beta \cdot |\relations|)$. Although for notational convenience we are representing $\beams_t$ as a set of triples, in practice we can avoid storing intermediate nodes $\nodes_t$ and construct the set of candidate answers by following $\relseq_t$ at the final iteration. Therefore, we only need to store relation sequences $\relseq_t$ and their negative log-likelihood $w_t$. Each tuple requires $\mathcal{O}(\tmax)$ space, as $|\relseq_t|$ is bounded by $\tmax$. The space complexity of the algorithm is thus $\mathcal{O}(\tmax \cdot \beta \cdot |\relations|)$.

\section{Expressive Power}
\label{app:expressive_power}
As mentioned in Section \ref{sec:analysis}, the approach described in this paper can be used to answer any valid \textit{existential positive first order query} on a knowledge graph $\kg$.
In order to prove this, we first consider the simpler class of \emph{conjunctive queries}. We will show a result similar to Proposition \ref{prop:expressive_power} for conjunctive queries, and then we will extend this result to the wider class of EPFO queries.

\subsection{Conjunctive Queries}
Given a knowledge graph $\kg = (\nodes, \relations, \edges)$ and a non-empty set of nodes $\seed \subseteq \nodes$, a conjunctive query on $\kg$ is a query involving only existential quantification and conjunction:

\[
\lquery[\lvar_{?}] =  \lvar_{?} . \exists \lvar_1, \dots, \lvar_{m} : \llit_1 \land \llit_2 \land \dots \land \llit_{|\lquery|},
\]

such that each literal $\llit_{i}$ is an atomic formula of the form $\lrel(\lvar, \lvar')$, where $\lvar \in \seed \cup \{\lvar_1, \dots, \lvar_m\}$, $\lvar' \in \{\ltarget, \lvar_1, \dots, \lvar_m\}$, $\lvar \neq \lvar'$, and $\lrel(\lvar, \lvar')$ is satisfied if and only if $\lvar \xrightarrow{r} \lvar'$, $r \in \relations$. 

In general, for any query $\lquery$, we can define its \textit{dependency graph} as the graph with nodes $\seed \cup \{\ltarget, \lvar_1, \dots, \lvar_m\}$. The edges of the graph are the literals $\{\llit_1, \dots, \llit_{|\lquery|}\}$, as each literal is of the form $\lrel(\lvar, \lvar')$ and defines an edge between $\lvar$ and $\lvar'$ \cite{Hamilton2018}. Figure \ref{fig:computation_graph} shows an example of the dependency graph of a conjunctive query.

We say that a query is \textit{valid} if its dependency graph is a directed acyclic graph (DAG), with $\seed$ as the source nodes and the target variable $\ltarget$ as the unique sink node. In the following, we will always consider valid queries, as this ensures that the query has no redundancies or contradictions.

\begin{lemma}
\label{lemma:conjuctive_queries}
Let $\kg = (\nodes, \relations, \edges)$ be a knowledge graph and $\lquery$ be a valid conjunctive query on $\kg$.
Then, there exists a sequence of relations $\relseq^{\star} \in \relations^*$ such that:
\[
\answers \subseteq \reach(\seed, \relseq^{\star}),
\]
where $\answers = \{v \in \nodes \mid \lquery[v] = \mathsf{True}\}$ is the denotation set of $\lquery$, namely the entities satisfying $\lquery$.
\end{lemma}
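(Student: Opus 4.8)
The plan is to read off $\relseq^{\star}$ directly from the dependency graph of $\lquery$. Since $\lquery$ is valid, its dependency graph $G_{\lquery}$ (with node set $\seed \cup \{\ltarget, \lvar_1,\dots,\lvar_m\}$ and one edge per literal) is a DAG whose source nodes are exactly $\seed$ and whose unique sink is $\ltarget$. Starting from $\ltarget$ and repeatedly following an incoming edge backwards, the walk cannot repeat a node (acyclicity) and cannot halt at a non-source (every non-source has an incoming edge), so after $L \geq 1$ steps it terminates at some anchor $a \in \seed$. Reversing it gives a directed path $a = \lvar^{(0)} \xrightarrow{r_1} \lvar^{(1)} \xrightarrow{r_2} \cdots \xrightarrow{r_L} \lvar^{(L)} = \ltarget$ in $G_{\lquery}$ whose edges are literals of $\lquery$, and I set $\relseq^{\star} = (r_1,\dots,r_L) \in \relations^{*}$.

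Next I would verify $\answers \subseteq \reach(\seed, \relseq^{\star})$. Fix any $v \in \answers$, so $\lquery[v] = \mathsf{True}$: there is an assignment $\nu$ of the bound variables $\lvar_1,\dots,\lvar_m$ to entities of $\kg$, with $\nu(\ltarget) = v$ and $\nu$ fixing each anchor to itself, such that every literal $\lrel(\lvar,\lvar')$ of $\lquery$ holds, i.e.\ $\nu(\lvar) \xrightarrow{\lrel} \nu(\lvar')$ in $\kg$. Restricting to the $L$ literals along the chosen path yields a walk $\nu(\lvar^{(0)}) \xrightarrow{r_1} \nu(\lvar^{(1)}) \xrightarrow{r_2} \cdots \xrightarrow{r_L} \nu(\lvar^{(L)})$ in $\kg$ with $\nu(\lvar^{(0)}) = a \in \seed$ and $\nu(\lvar^{(L)}) = v$. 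By the definition of $R$-reachability in Section~\ref{sec:seeking}, this witnesses that $v$ is $\relseq^{\star}$-reachable from $\seed$, hence $v \in \reach(\seed, \relseq^{\star})$. As $v$ was arbitrary, the inclusion follows.

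The graph-theoretic claim (the backward walk terminates inside $\seed$) and the identification of a path in $G_{\lquery}$ with its sequence of relation labels are routine. The one point that needs care — and what I expect to be the main, if minor, obstacle — is pinning down the semantics of a satisfying assignment so that it slots cleanly into the reachability definition: that anchors act as constants (so $\nu(a) = a \in \seed$), that a literal $\lrel(\lvar,\lvar')$ being true is exactly the existence of the labelled edge $\nu(\lvar) \xrightarrow{\lrel} \nu(\lvar')$, and that discarding the off-path literals can only enlarge the reachable set, which is precisely why we obtain $\subseteq$ rather than equality. Once this dictionary is in place the argument is immediate, and the extension to EPFO queries in Proposition~\ref{prop:expressive_power} will amount to writing $\lquery$ in DNF, applying this lemma to each of its $n_{\lor}+1$ conjunctive disjuncts, and taking the union of the resulting reach sets.
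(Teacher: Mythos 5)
Your proof is correct, but it takes a genuinely different route from the paper's. The paper argues by induction on the number of literals $|\lquery|$: the base case is a single literal $\lrel(v,\ltarget)$ with $\{v\}=\seed$, and the inductive step adds one literal to a valid query, splitting into the case where the target variable is unchanged (the new literal only shrinks $\answers$, so the old $\relseq^{\star}$ still works) and the case where a fresh target $\ltarget'$ is introduced via a literal $\lrel(\ltarget,\ltarget')$ (append $r$ to $\relseq^{\star}$). You instead extract the witness sequence in one shot: walk backwards from the unique sink $\ltarget$ of the dependency graph until you hit a source in $\seed$, read off the relation labels of the resulting source-to-sink path, and then push any satisfying assignment along that path to certify reachability. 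Both arguments are sound and both ultimately rest on the same structural fact (validity guarantees a directed path from $\seed$ to $\ltarget$ whose literals alone already witness reachability, the off-path literals only adding constraints that shrink $\answers$). Your version has the advantage of making the witness $\relseq^{\star}$ explicit and of not presupposing that every valid query can be built up literal-by-literal through valid intermediate queries --- an assumption the paper's induction leaves implicit; the paper's version buys a more mechanical, case-by-case verification that dovetails with the subsequent induction on $n_{\lor}$ for Proposition~\ref{prop:expressive_power}, which you correctly anticipate as taking the union of the per-disjunct reach sets.
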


\begin{proof}
We proceed by induction on the number of literals $|\lquery|$.

\textit{Base case.}
Assume $|\lquery| = 1$. Then, since $\lquery$ is valid, the query is of the form:
\[
\lquery[\lvar_{?}] =  \ltarget . \lrel(v, \ltarget),
\]
with $\{v\} = \seed$.
We have:
\begin{align}
\answers &= \{ v' \in \nodes \mid \lquery[v'] = \mathsf{True} \} \\
&= \{ v' \in \nodes \mid v \xrightarrow[]{r} v'\} \\
&= \reach(\seed, r).
\end{align}
Hence the sequence with only relation $r$ is sufficient to generate the set of correct answers $\answers$.

\textit{Inductive step.}
Let $\lquery$ be a conjunctive query of the form:
\[
\lquery[\lvar_{?}] =  \lvar_{?} . \exists \lvar_1, \dots, \lvar_{m} : \llit_1 \land \llit_2 \land \dots \land \llit_{|\lquery|}.
\]
Assume that there exists a sequence of relations $\relseq^{\star} \in \relations^*$, such that:
\[
\answers = \{v \in \nodes \mid \lquery[v] = \mathsf{True}\} \subseteq \reach(\seed, \relseq^{\star}).
\]
Consider a query $\lquery'$ constructed by adding a literal $\llit_{|\lquery| + 1}$ to $\lquery$, and let $\answers'$ be the denotation set of $\lquery'$, namely the set of nodes satisfying $\lquery'$. The conjunctive query $\lquery'$ may or may not have the same target variable of $\lquery$.

If $\lquery'$ shares the same target variable of $\lquery$, then $\lquery'$ is of the form:
\[
\lquery'[\lvar_{?}] =  \lvar_{?} . \exists \lvar_1, \dots, \lvar_{m} : \llit_1 \land \llit_2 \land \dots \land \llit_{|\lquery|} \land \llit_{|\lquery + 1|}.
\]
Note that:
\begin{align}
\answers' &= \{v \in \nodes \mid \lquery'[v] = \mathsf{True}\} \\
& \subseteq \{v \in \nodes \mid \lquery[v] = \mathsf{True}\} \\
& \subseteq \reach(\seed, \relseq^{\star}).
\end{align}
Hence, if $\lquery$ and $\lquery'$ share the same target variable, the same sequence of relations that generates candidate answers for $\lquery$ can be used to generate candidate answers for $\lquery'$.

If $\lquery$ and $\lquery'$ do not have the same target variable, then we can write $\lquery'$ as:
\[
\lquery'[\ltarget'] =  \ltarget' . \exists \ltarget, \lvar_1, \dots, \lvar_{m} : \llit_1 \land \llit_2 \land \dots \land \llit_{|\lquery|} \land \llit_{|\lquery + 1|}.
\]
Since $\lquery'$ is a \emph{valid} conjunctive query on $\kg$, $\llit_{|\lquery + 1|}$ is of the form $\lrel(\ltarget, \ltarget')$. Then we have that:
\begin{align}
\answers' &= \{v' \in \nodes \mid \lquery'[v'] = \mathsf{True}\} \\
&= \{v' \in \nodes \mid \exists \ltarget, \lvar_1, \dots, \lvar_{m} : \llit_1 \land \llit_2 \land \dots \land \llit_{|\lquery|} \land \ltarget \xrightarrow{r} v' \} \\
&= \{ v' \in \nodes \mid \exists v \in \answers : v \xrightarrow{r} v' \} \\
&= \reach(\answers, r) \\
& \subseteq \reach(\reach(\seed, \relseq^{\star}), r) \\
&= \reach(\seed, (\relseq^{\star}, r)).
\end{align}
Therefore, the sequence $(\relseq^{\star}, r)$ can be used to generate the answers to $\lquery'$.
\end{proof}

\subsection{Existential Positive First-Order Queries}
Any EPFO query can be expressed in disjunctive normal form (DNF), namely a disjunction of one or more conjunctions:

\[
\lquery[\lvar_{?}] =  \ltarget . \exists \lvar_1, \dots, \lvar_{m} : \lconj_1 \lor \lconj_2 \lor \dots \lor \lconj_{n_\lor + 1},
\]
such that:
\begin{itemize}
\item each $\lconj_i$ is a conjunction of literals of the form $\lconj_i = \llit_{i1} \land \llit_{i2} \land \dots \land \llit_{i|\lconj_i|}$
\item each literal $\llit_{ij}$ is an atomic formula of the form $\lrel(\lvar, \lvar')$, where $\lvar \in \seed \cup \{\lvar_1, \dots, \lvar_m\}$, $\lvar' \in \{\lvar_{?}, \lvar_1, \dots, \lvar_m\}$, $\lvar \neq \lvar'$, and $\lrel(\lvar, \lvar') = \mathsf{True}$ if and only if $\lvar \xrightarrow{r} \lvar'$, $r \in \relations$.
\end{itemize}

As above, we assume that the $\lquery$ is a \emph{valid} query on $\kg$, namely all $\lconj_i$ are valid conjunctive queries.
As shown in Figure \ref{fig:computation_graph}, we can represent any EPFO query $\lquery$ with a computation graph containing the operations that are required to answer $\lquery$. Specifically, each atomic formula can be represented as a relation projection, whereas conjunctions and disjunctions can be represented as intersection and union operations respectively.

\paragraph{Proof of Proposition \ref{prop:expressive_power}}
We assume that $\lquery$ is expressed in disjuctive normal form and we denote with $n_\lor$ the number of disjunction ($\lor$) operators in $\lquery$. We proceed by induction on $n_\lor$.

\emph{Base case}. Assume $n_\lor = 0$. Then, $\lquery$ is a conjunctive query, and by Lemma \ref{lemma:conjuctive_queries}, there exists $\relseq^{\star} \in \relations^*$ such that:
\[
\answers = \{ v \in \nodes \mid \lquery[v] = \mathsf{True} \} \subseteq \reach(\seed, \relseq^{\star}).
\]

\emph{Inductive step}.
Let $\lquery$ be an EPFO query in DNF:
\[
\lquery[\lvar_{?}] =  \ltarget . \exists \lvar_1, \dots, \lvar_{m} : \lconj_1 \lor \lconj_2 \lor \dots \lor \lconj_{n_\lor + 1}.
\]
Consider the subquery $\lquery'$ consisting of the conjunction terms $\lconj_1 \lor \lconj_2 \lor \dots \lor \lconj_{n_\lor}$ and assume that there exist $k \leq n_\lor$ sequences of relations $\relseq^{\star}_i$ such that:
\[
\answers' = \{v \in \nodes \mid \lquery'[v] = \mathsf{True} \} \subseteq \bigcup_{i = 1}^{k} \reach(\seed, \relseq^{\star}_i).
\]
Note that $\lconj_{n_\lor + 1}$ is a valid conjunctive query and by Lemma \ref{lemma:conjuctive_queries} there exists $\relseq^{\star}_{k + 1} \in \relations^*$ such that:
\[
\{v \in \nodes \mid \lconj_{n_\lor + 1}[v]\} \subseteq \reach(\seed, \relseq^{\star}_{k + 1}).
\]
Then, it holds that:
\begin{align}
\answers &= \{v \in \nodes \mid \lquery[v] = \mathsf{True} \} \\
&= \{v \in \nodes \mid \lquery'[v] \lor \lconj_{n_\lor + 1}[v]\} \\
&= \answers' \cup  \{v \in \nodes \mid \lconj_{n_\lor + 1}[v]\} \\
&\subseteq \bigcup_{i = 1}^{k} \reach(\seed, \relseq^{\star}_i) \cup \{v \in \nodes \mid \lconj_{n_\lor + 1}[v]\} \\
& \subseteq \bigcup_{i = 1}^{k + 1} \reach(\seed, \relseq^{\star}_i).
\end{align}
\hfill \qedsymbol

\begin{figure}[!t]
    \centering
    \includegraphics[width=\linewidth]{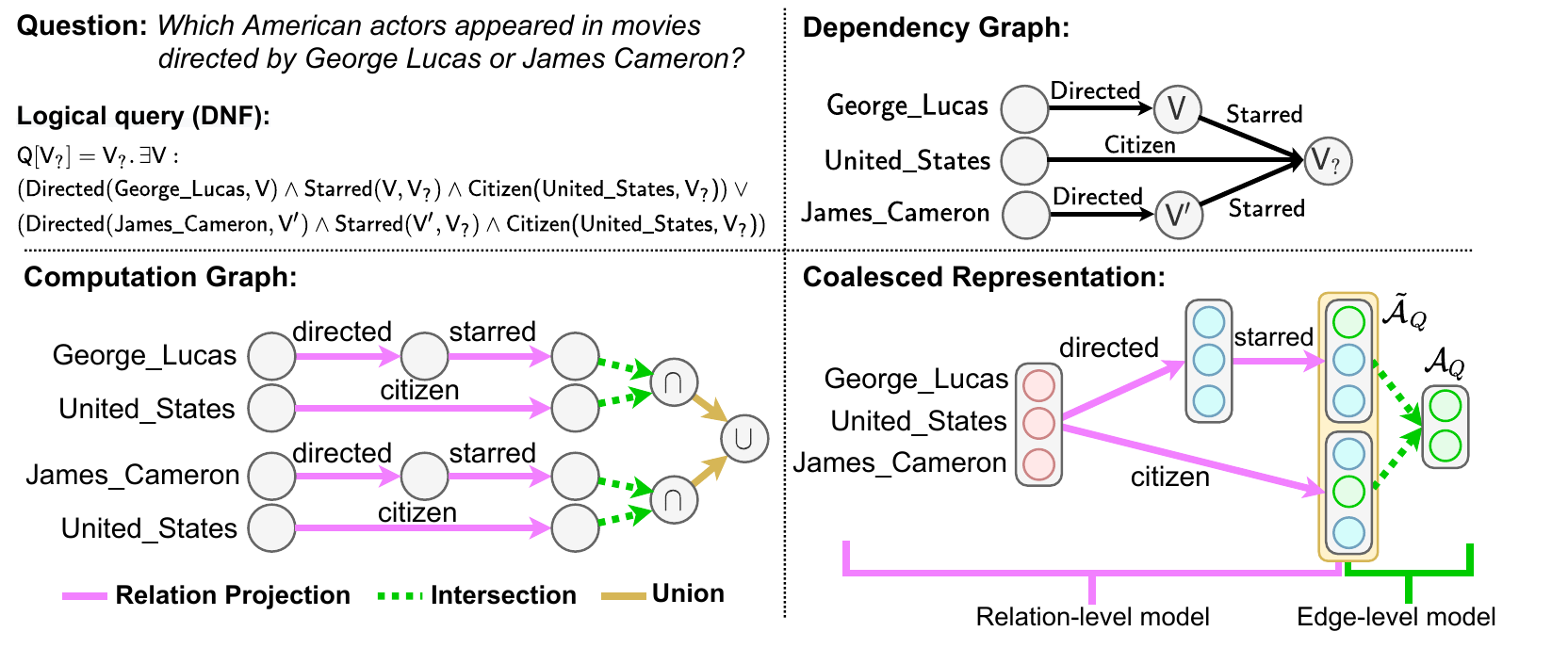}
    \caption{Example of a natural language question and the corresponding EPFO query expressed in DNF (top left); the dependency graph of the EPFO query (top right); the computation needed to answer the query in the original KG (bottom left); and the computation performed by our approach in the coalesced representation (bottom right). Note that, for completeness, we represent two paths in the coalesced representation, but only one is sufficient.}
    \label{fig:computation_graph}
\end{figure}

\section{Training strategies}
\label{app:training}
In this section we describe the training strategies that we used to optimize the parameters of our relation-level model and improve generalization performance.

\paragraph{Supervision}
For the experiments on KBQA, we assume that we only have access to pairs of questions and answers, i.e. the actual inferential chain leading from the question to the answer is latent.
Therefore, we resort to weak supervision to train the model.
Since at training time the set $\answers$ is known, we can compute all relation sequences $\relseq^{\star}$, such that $\candidates = \reach(\seed, \relseq^{\star})$ is the smallest reachable superset of $\seed$.
If the smallest reachable superset of $\answers$ is not unique, all relation sequences leading to any superset of $\answers$ of minimum cardinality are considered.
Note that the set of all possible relation sequences of a given length originating from $\seed$ in $\coalkg$ is much smaller than the set of all possible paths starting from nodes in $\seed$ in $\kg$, as shown in Appendix \ref{app:scalability_coalescing}.
Since the \textit{CFQ} dataset contains boolean questions (where the answer is not a set of entities), for the experiment on compositional generalization we use the logical parsing provided in the dataset to compute the correct sequences of relations.
We assume these sequences of relations are stored in such a way that the set of relations exiting from the a node in $\coalkg$ can be accessed efficiently in constant time. 
Then, at any decoding time step $t$, an edge is labeled as positive if and only if it belongs to a sequence of relations leading to $\candidates$.
The model is then trained using teacher forcing, namely we feed into the decoder relation sequences leading from $\seed$ to $\candidates$. We do not have multiple decoding time steps at training time, as the whole sequence is provided at once, and relation types are appropriately masked so that they cannot attend to items in future positions.


\paragraph{Path dropout}
Previous work \cite{Sun2018} has shown that randomly removing facts from the knowledge base at training time can be beneficial for generalization. Inspired by such insight, we employ a similar technique to enhance the performance of our model.
Specifically, in the first epochs, we randomly remove paths that are not labeled as correct with probability $p_{\textit{drop}}$, in order to make the problem easier for the model. This probability is the linearly decreased to 0 during training.
We set the initial $p_{\textit{drop}}$ to 0.5 and we gradually lower it to 0 until half of the training epochs have been run.

\paragraph{Pretraining and fine tuning}
For the experiments on \textit{WebQuestionsSP} and \textit{CFQ}, we found it beneficial to pretrain our model in order to incorporate knowledge from Freebase into the layers of the decoder.
Specifically, we sampled a total of 500k 1-hop or 2-hop paths and we trained the model to predict the sequence of relations connecting two nodes, given the embeddings of the source node and the target node of the path.
In order to do this, we replace the encoder with a simple 2-layer feed-forward network, with a ReLU non-linearity. This network receives as input two 100-dimensional embeddings for the source and target nodes of the path, and maps them to a $d_{\textit{model}}$-dimensional representation. This representation is then fed into the decoder to predict the relations connecting the two nodes.
We use a concatenation of 50-dimensional random and 50-dimensional pretrained TransE \cite{Bordes2013} embeddings \cite{han2018openke} to represent the entities in the KG.
Moreover, on \textit{WebQuestionsSP} we observed that it was helpful to fine-tune BERT in order to produce better representations of the relations in the knowledge graph. The same BERT model is still used to encode both the questions and relation types.

\section{Experimental Details}
\label{app:experimental_details}

\subsection{Datasets}
\label{app:datasets}

\paragraph{KBQA datasets}
We performed our experiments on KBQA on two widely adopted datasets, namely \textit{MetaQA} \citep{Zhang2018} and \emph{WebQuestionsSP} \citep{yih2015}. We provide below a detailed description of each one.
\begin{itemize}
\item \textbf{MetaQA}\footnote{\url{https://github.com/yuyuz/MetaQA}} \citep{Zhang2018} is a multi-hop question answering dataset including 400K question-answer pairs. Questions are answerable using the WikiMovies knowledge base.
The dataset includes 1-hop, 2-hop and 3-hop questions. It is provided under the Creative Commons Public License Attribution 3.0 Unported\footnote{\url{https://creativecommons.org/licenses/by/3.0/}}.
We evaluated our approach on 2-hop (\textbf{MetaQA 2}) and 3-hop (\textbf{MetaQA 3}) questions.
\item \textbf{WebQuestionsSP} \citep{yih2015} comprises 4737 questions over a subset of Freebase, which is provided under the CC BY 2.5 license\footnote{\url{https://creativecommons.org/licenses/by/2.5/}}. The questions in this dataset are answerable by performing relational following for up to two hops and an optional relational filtering operation on the result.
\end{itemize}

Table \ref{tab:data_stats} shows the number of questions in the training, development and test splits of each dataset. We use the same splits as in \citep{Sun2018}. Table 3 reports instead the number of triples (edges), entities (nodes) and relations in the KGs used in our experiments.

\begin{table}[!htb]
\caption{Number of questions in the training, development and test sets}
\centering
\label{tab:data_stats}
\begin{tabular}{@{}lccc@{}}
\toprule
 & \textbf{Train} & \textbf{Dev} & \textbf{Test} \\ \midrule
\textbf{MetaQA 2} & $118\,980$ & $14\,872$ & $14\,872$ \\
\textbf{MetaQA 3} & $114\,196$ & $14\,274$ & $14\,274$ \\
\textbf{WebQSP} & $2\,848$ & $250$ & $1\,639$ \\
\bottomrule
\end{tabular}%
\end{table}

\begin{table}[!htb]
\caption{Size of the knowledge graphs used for \textit{MetaQA} and \textit{WebQuestionsSP}}
\centering
\label{tab:kg_stats}
\begin{tabular}{@{}lccc@{}}
\toprule
& \textbf{Triples} & \textbf{Entities} & \textbf{Relations}  \\
\midrule
\textbf{MetaQA} & $392\,906$ & $43\,230$ & $18$ \\
\textbf{WebQSP} & $23\,587\,078$ & $7\,448\,928$ & $575$  \\
\bottomrule
\end{tabular}%
\end{table}

\paragraph{Compositional generalization}
Our experiments on compositional generalization rely on the \emph{Compositional Freebase Questions} (\emph{CFQ}) dataset. It includes a total of $239\,357$ English question-answer pairs that are answerable using the public Freebase data \cite{freebase}. \textit{CFQ} is released under the CC-BY-4.0 license\footnote{\url{https://creativecommons.org/licenses/by/4.0/}} provides train-test splits designed to measure the compositional generalization ability of a machine-learning model.
Each question is composed of primitive elements (\textit{atoms}), which include entity mentions, predicates and question patterns. These atoms can be combined in different ways (\emph{compounds}) to instantiate the specific examples in the dataset.
The train-test splits are designed with the twofold goal of:
\begin{enumerate}
    \item \emph{minimizing atom divergence}: the atoms present in the test set are also included in the training set and their distribution in the test set is as similar as possible to their distribution in the test set;
    \item \emph{maximizing compound divergence}: the distribution of compounds in the test set is as different as possible from their distribution in the training set.
\end{enumerate}
The dataset provides three different splits (\textbf{MCD1}, \textbf{MCD2}, \textbf{MCD3}), with \emph{maximum compound divergence} (MCD) and low atom divergence. For each question, both a logical parsing and the expected answers are included. Hence, \textit{CFQ }can be used both for semantic parsing and end-to-end question answering.

\subsection{Baselines}
\label{app:baselines}

\paragraph{KBQA Baselines}
On \emph{WebQuestionsSP} and \emph{MetaQA}, we compared our approach against the following baselines.

\begin{itemize}
\item \textbf{KV-MemNN} is a key-value memory network \citep{Miller2016} that makes use of a memory of key-value pairs to store the triples from the KG. Keys are joint representation of the subject and relation of each triple, whereas the objects of the triples are used as the corresponding values.
\item \textbf{ReifKB} \citep{Cohen2020} uses a compact encoding for representing symbolic KGs, called a sparse-matrix reified KG, which can be distributed across multiple GPUs, allowing efficient symbolic reasoning.
\item \textbf{GRAFT-Net} \citep{Sun2018} is
a graph neural network designed to reason over question-specific subgraphs. The message-passing scheme is conditioned on the input question and takes inspiration from personalized page rank to perform a directed propagation of the messages starting from the entities mentioned in the question.
%
\item \textbf{PullNet} \citep{Sun2019} builds on top of GRAFT-Net and improves the quality of the question-specific subgraphs with an iterative process based on a learned classifier. This classifier selects which node should be expanded at each iteration and it is a further GNN with the same architecture as GRAFT-Net.

\item \textbf{EmbedKGQA} \citep{Saxena2020} uses KG embeddings for multi-hop question answering.
%
\item
\textbf{EmQL} \cite{Sun2020} relies on a query embedding method that combines a count-min sketch representation for entity sets with logical operations implemented via neural retrieval over embedded KG triples.
\end{itemize}

\paragraph{\emph{CFQ} Baselines}

For the experiment on compositional generalization, we compare to the best-performing baselines in \textit{CFQ}'s public leaderboard\footnote{\url{https://github.com/google-research/google-research/tree/master/cfq}}. These baselines are all designed for semantic parsing and are encoder-decoder architectures trained to output a formal query given a natural language question.
\citet{Keysers2020} evaluated the compositional generalization capabilities of three sequence-to-sequence models, namely one
based on LSTMs \cite{hochreiter1997long} equipped with an attention mechanism \cite{Bahdanau2015} (\textbf{LSTM + Attention}), a \textbf{Transformer} \cite{Vaswani2017} and a \textbf{Universal Transformer} \citep{Dehghani2019}.
\citet{Furrer2020} conducted a study that assessed the performance of three more models. The
\textbf{Evolved Transformer} \citep{So2019} is a variation of the Transformer discovered with an evolutionary neural architecture search seeded with the original model of \citet{Vaswani2017}.
The \textit{Text-to-Text Transfer Transformer} (T5) \cite{Raffel2020} is a model pre-trained to treat every task as a text-to-text problem. \citet{Furrer2020} fine-tuned all variants, including the largest one with 11 billion parameters (\textbf{T5-11B}).
Following the technique introduced by \citet{Guo2019}, \citet{Furrer2020} further implemented the variant
\textbf{T5-11B-mod}, which learns to predict an intermediate representation of the SPARQL query which is closer to the formulation of the questions in natural language.
Finally, \citet{Guo2020} introduced the \textit{Hierarchical Poset Decoding} (\textbf{HPD}), which enforces partial permutation invariance, thus taking into account semantics and capturing higher-level compositionality.

\subsection{Hyperparameters and Reproducibility}
\label{app:hyperparameters}
We train the relation-level model for 300 epochs on both datasets. We use a mini-batch size of 128 for \textit{MetaQA} and 32 for \textit{WebQuestionsSP}. We set the dimension of the embeddings to $d_{\textit{model}} = 768$, as we use 12 attention heads applied to tensors of size $64$.
We optimize the model using the AdamW optimizer \cite{Loshchilov2019}, with weight decay of $10^{-3}$. The initial learning rate is set to $10^{-4}$ for \textit{MetaQA} and $5 \cdot 10^{-6}$ for \textit{WebQuestionsSP}.
We apply dropout regularization, with probability $0.1$ on both the encoder and the decoder layers.
We use a beam width $\beta = 10$ for the knowledge seeking algorithm described in Appendix \ref{app:computational_complexity}.

BERT is fine-tuned for \textit{WebQuestionsSP}, whereas the weights are kept fixed for \textit{MetaQA}.
For experiments on \textit{WebQuestionsSP}, we found beneficial to pretrain our model in order to incorporate knowledge from Freebase into the layers of the decoder, as explained in Appendix \ref{app:training}.
Specifically, we sampled a total of 500k 1-hop or 2-hop paths and we trained the model to predict the sequence of relations connecting two nodes, given the embeddings of the source node and the target node of the path.

For the GCN-based edge-level model, we used the same implementation and hyperparameters of the version publicly available at: \url{https://github.com/OceanskySun/GraftNet}. All experiments are performed on a NVIDIA Tesla V100 GPU with 16 GB of memory.

\subsection{Discussion and qualitative examples}
In our experiments on KBQA, for each model, we selected the entity $v^\star \in \nodes$ with the highest likelihood to be a correct answer.
The answer to the question is considered correct if $v^\star \in \answers$.
For the unrefined \ours{} model, we report the expected performance selecting $v^\star$ uniformly at random from $\candidates$.

The high performance of the unrefined model on \emph{MetaQA} confirms our hypothesis that the relation-level model applied on the coalesced representation is sufficient for tasks such as multi-hop question answering.
On \textit{WebQuestionsSP}, the edge-level model is needed because relation projection is not sufficient to answer some of the questions in the dataset. Figure \ref{fig:webqsp_path_example} shows some examples of the relation sequences predicted by our model on the test set of \emph{WebQuestionsSP}. 
For examples \textit{(a)} and \textit{(b)}, the relation-level model is sufficient, as the set of candidates $\candidates$ is the same as the set of the actual answers $\answers$. However, examples \textit{(c)} and \textit{(d)} demonstrate the need for an edge-level model, as following a sequence of relations is not always sufficient to obtain the correct answer.
Note that the edge-level model is applied on a 1-hop neighborhood expansion of the graphs depicted in Figure \ref{fig:webqsp_path_example} and constrained to select an answer among the candidates $\candidates$.
Figure \ref{fig:webqsp_path_example} also shows that the answer paths for 2-hop questions in \emph{WebQuestionsSP} always contain compound value type (CVT) entities in the middle (depicted with cyan nodes in the image). These are special entity types that are used in \textit{Freebase} to describe $n$-ary relationships between entities.
EmQL uses different encodings for CVT nodes and the real entities, while \ours{} does not depend on the KG specifics.
\begin{figure*}[!t]
    \centering
    \includegraphics[width=\linewidth]{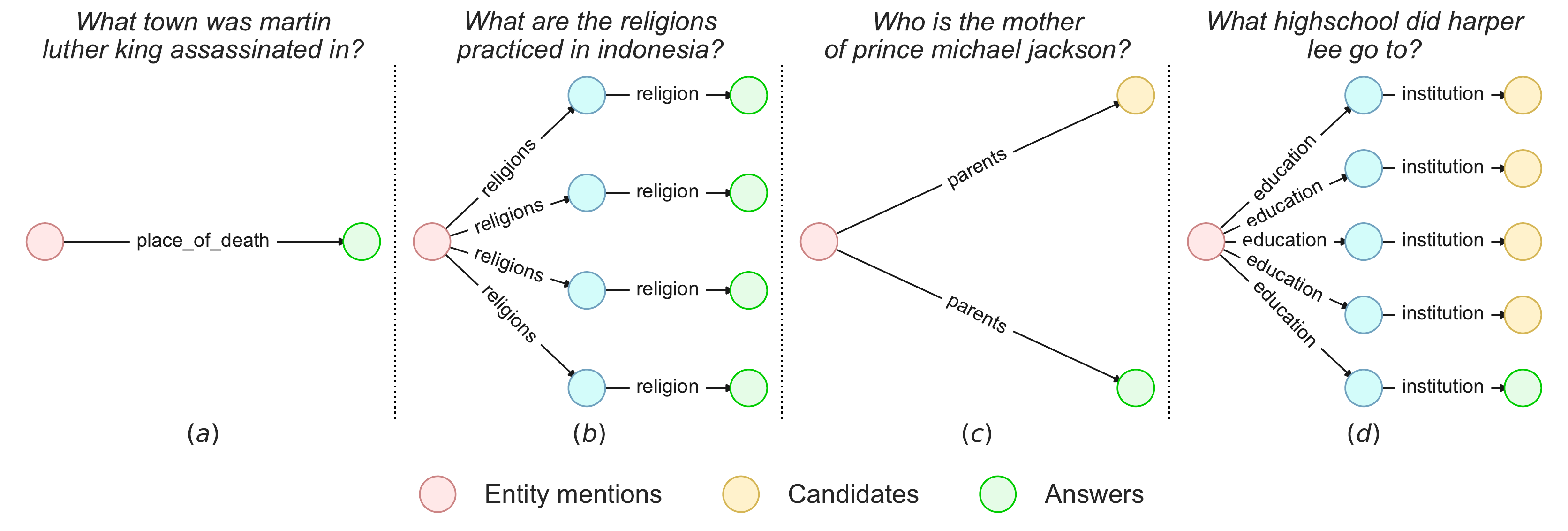}
    \caption{Example of relation sequences predicted by the unrefined relation-level model on the test set of \textit{WebQuestionsSP}}
    \label{fig:webqsp_path_example}
\end{figure*}

\subsection{Analysis of relational coalescing}

\begin{figure}[!b]
\centering
\begin{subfigure}{0.49\linewidth}
\centering
\includegraphics[width=0.8\linewidth]{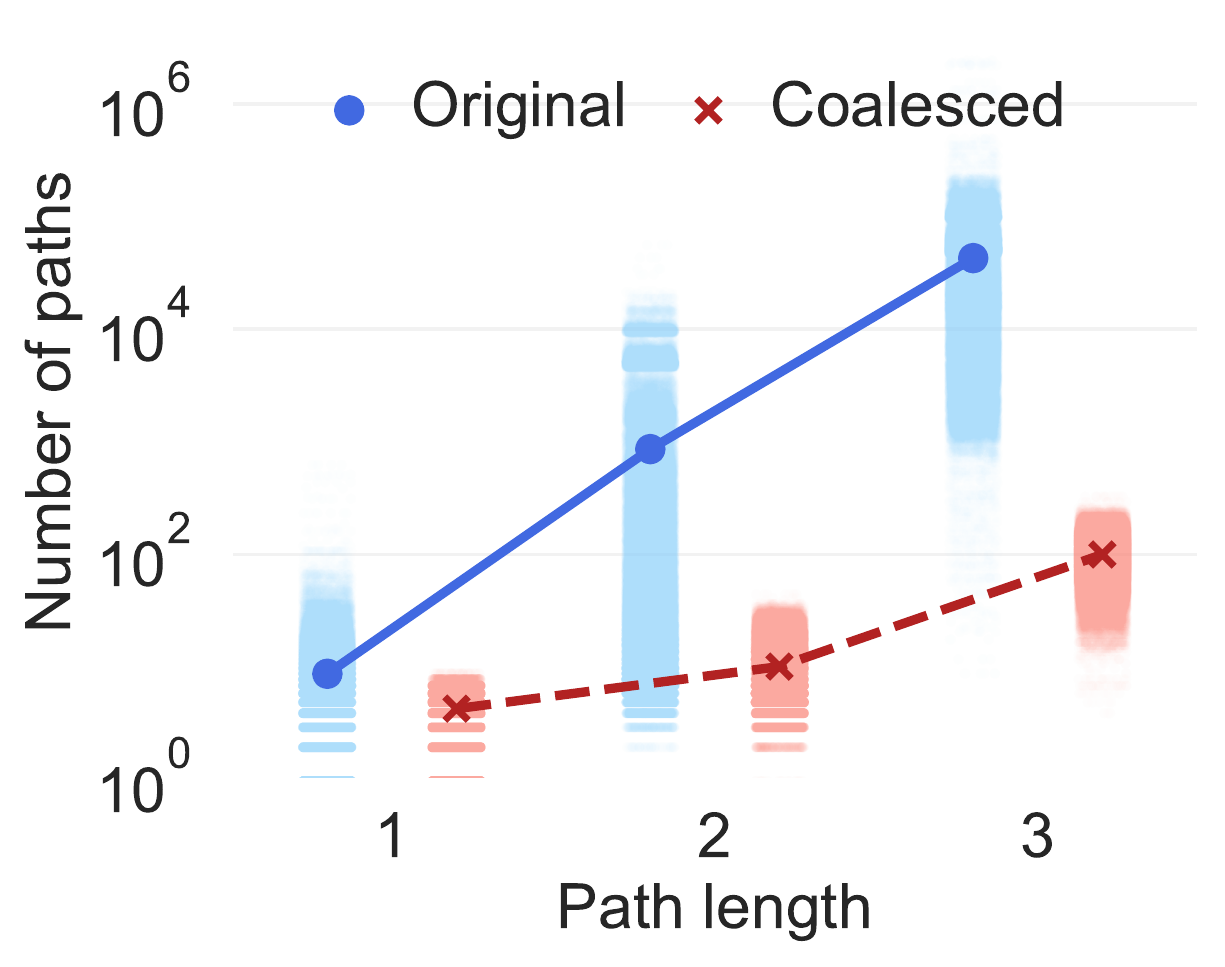}
\end{subfigure}
\begin{subfigure}{0.49\linewidth}
\centering
\includegraphics[width=0.8\linewidth]{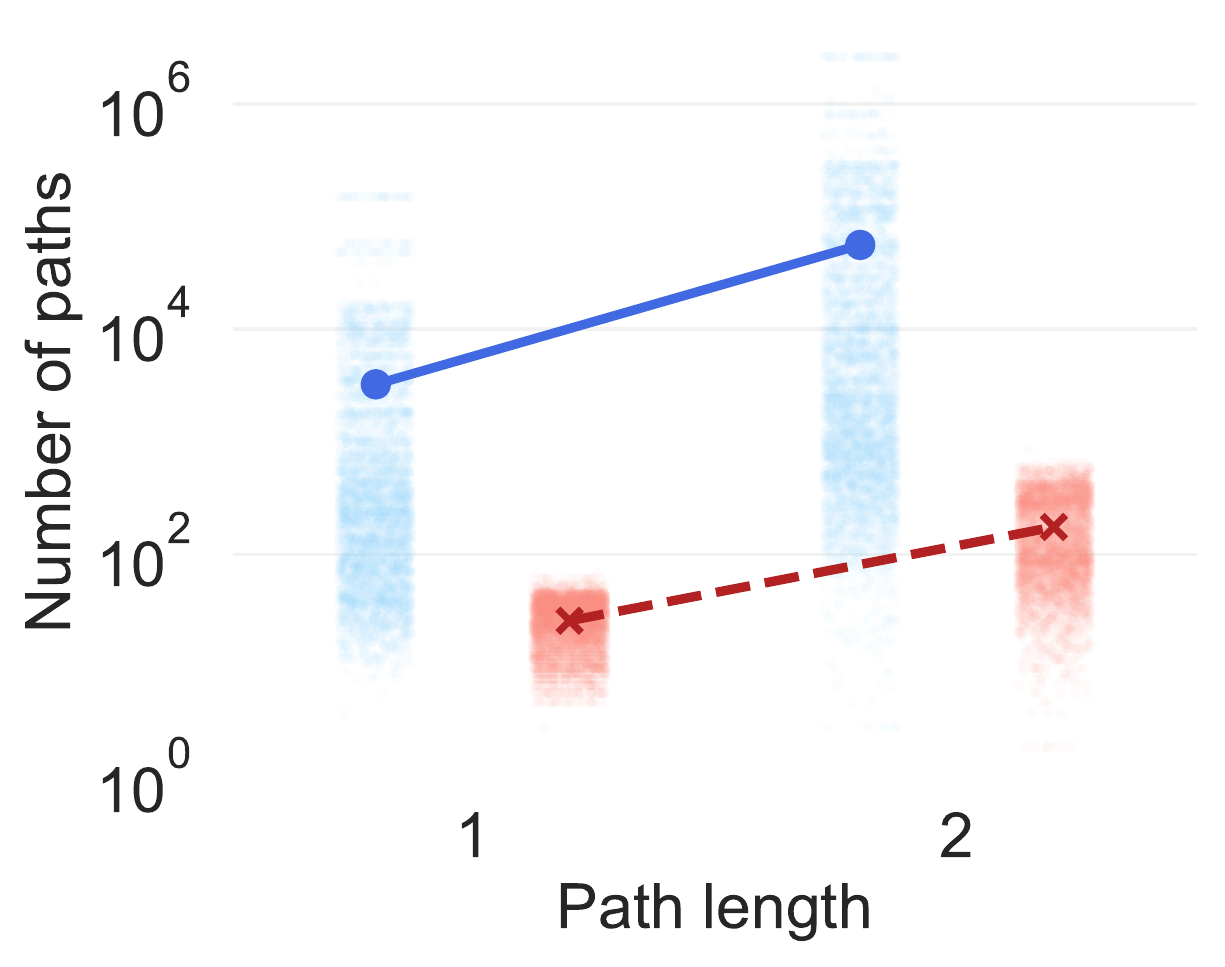}
\end{subfigure}
\caption{Number of paths by path length in the original and coalesced graphs for \textit{MetaQA} (right) and \textit{WebQuestionsSP} (left)}
\label{fig:coalescing_stats}
\end{figure}

\label{app:scalability_coalescing}
In order to answer a question that requires multi-hop reasoning over a KG correctly, one should ideally either consider the full KG or a complete subgraph consisting of all possible multi-hop neighbors of the entities mentioned in the question. However, such subgraphs can be very large, as shown in Table \ref{tab:subgraph_sizes}.
The subgraphs we analyzed include 3-hop neighbors for \textbf{MetaQA 3} and both 1 and 2-hop neighbors of the entities mentioned in the question for \textbf{WebQSP}.
The average number of nodes for MetaQA 3 exceeds 10k and for the larger Freebase KG there are question subgraphs with millions of nodes.
This makes impractical to perform KBQA on complete subgraphs with models that scale with the number of edges or nodes in the graph.

\begin{table}[!htb]
\centering
\caption{Size of the subgraphs including all neighbors of the entities mentioned in a question}
\label{tab:subgraph_sizes}
\begin{tabular}{@{}lcc@{}}
\toprule
 & \textbf{MetaQA 3} & \textbf{WebQSP} \\ \midrule
\textbf{Mean nodes} &  8.6k & 36k \\
\textbf{Max nodes} & 30k &  1.6M \\
\midrule
\textbf{Mean facts} & 49k & 211k \\
\textbf{Max facts}  & 230k & 9.5M \\ \bottomrule
\end{tabular}%
\end{table}

As a further analysis, we investigate the computational advantage of relational coalescing by computing the number of paths originating from the entities mentioned in the questions both in the original KG and in the coalesced representation.
Figure \ref{fig:coalescing_stats} presents the results for both \emph{MetaQA} and \emph{WebQuestionsSP}.
The experiment shows that relation coalescing allows reducing the number of paths by up to 2 orders of magnitude on both datasets. This directly impacts both the memory requirements and the efficiency of our approach.
For \emph{MetaQA} we analyzed paths of length up to $3$, whereas for \emph{WebQuestionsSP} we consider paths of length $1$ or $2$, as the dataset does not include $3$-hop questions.

\end{document}